\documentclass{article}

\usepackage{microtype}
\usepackage{graphicx}
\usepackage{subcaption}
\usepackage{booktabs}
\usepackage{hyperref}
\usepackage{enumitem}

\usepackage[preprint]{icml2026}

\usepackage{amsmath}
\usepackage{amssymb}
\usepackage{mathtools}
\usepackage{amsthm}
\usepackage[capitalize,noabbrev]{cleveref}

\theoremstyle{plain}
\newtheorem{theorem}{Theorem}[section]
\newtheorem{proposition}[theorem]{Proposition}
\newtheorem{lemma}[theorem]{Lemma}
\newtheorem{corollary}[theorem]{Corollary}
\theoremstyle{definition}
\newtheorem{definition}[theorem]{Definition}

\theoremstyle{remark}
\newtheorem{remark}[theorem]{Remark}

\newcommand{\lse}{\mathrm{LSE}}
\newcommand{\softmax}{\mathrm{softmax}}
\newcommand{\R}{\mathbb{R}}
\newcommand{\eps}{\varepsilon}

\icmltitlerunning{UltraLIF: Differentiable SNNs via Ultradiscretization}

\begin{document}

\twocolumn[
\icmltitle{UltraLIF: Fully Differentiable Spiking Neural Networks \\
           via Ultradiscretization and Max-Plus Algebra}


\begin{icmlauthorlist}
\icmlauthor{Jose Marie Antonio Mi\~{n}oza}{cair}
\end{icmlauthorlist}

\icmlaffiliation{cair}{Center for AI Research PH}

\icmlcorrespondingauthor{Jose Marie Antonio Mi\~{n}oza}{jminoza@upd.edu.ph}

\icmlkeywords{Spiking Neural Networks, Ultradiscretization, Max-Plus Algebra, Neuromorphic Computing}

\vskip 0.3in
]

\printAffiliationsAndNotice{}

\begin{abstract}
Spiking Neural Networks (SNNs) offer energy-efficient, biologically plausible computation but suffer from non-differentiable spike generation, necessitating reliance on heuristic surrogate gradients. This paper introduces \textbf{UltraLIF}, a principled framework that replaces surrogate gradients with \emph{ultradiscretization}, a mathematical formalism from tropical geometry providing continuous relaxations of discrete dynamics. The central insight is that the max-plus semiring underlying ultradiscretization naturally models neural threshold dynamics: the log-sum-exp function serves as a differentiable soft-maximum that converges to hard thresholding as a learnable temperature parameter $\eps \to 0$. Two neuron models are derived from distinct dynamical systems: UltraLIF from the LIF ordinary differential equation (temporal dynamics) and UltraDLIF from the diffusion equation modeling gap junction coupling across neuronal populations (spatial dynamics). Both yield fully differentiable SNNs trainable via standard backpropagation with no forward-backward mismatch. Theoretical analysis establishes pointwise convergence to classical LIF dynamics with quantitative error bounds and bounded non-vanishing gradients. Experiments on six benchmarks spanning static images, neuromorphic vision, and audio demonstrate improvements over surrogate gradient baselines, with gains most pronounced in single-timestep ($T{=}1$) settings on neuromorphic and temporal datasets. An optional sparsity penalty enables significant energy reduction while maintaining competitive accuracy.
\end{abstract}

\section{Introduction}
\label{sec:intro}

Spiking Neural Networks (SNNs) represent a promising paradigm for energy-efficient machine learning, with significant potential for neuromorphic hardware deployment \citep{maass1997networks, roy2019towards}. In contrast to artificial neural networks (ANNs) communicating via continuous activations, SNNs process information through discrete spike events, emulating biological neural computation. This event-driven nature enables substantial energy savings; Intel's Loihi chip demonstrates up to $1000\times$ energy reduction compared to GPUs on certain tasks \citep{davies2018loihi}.

However, SNN training remains challenging due to the \emph{non-differentiability} of spike generation. The standard Leaky Integrate-and-Fire (LIF) neuron follows the dynamics:
\begin{align}
v^{(t+1)} &= \tau v^{(t)} + I^{(t)} \label{eq:lif_voltage}\\
s^{(t+1)} &= H(v^{(t+1)} - \theta) \label{eq:lif_spike}
\end{align}
where the Heaviside step function $H(\cdot)$ has gradient zero almost everywhere. The dominant approach employs \emph{surrogate gradients}, replacing the true gradient with a smooth approximation during backpropagation \citep{neftci2019surrogate, zenke2021remarkable}. While empirically effective, surrogate gradients introduce a fundamental mismatch between forward (discrete) and backward (continuous) passes (Figure~\ref{fig:spike_comparison}a), with limited theoretical understanding of convergence properties \citep{li2021differentiable, gygax2025elucidating}.

\begin{figure*}[t]
\centering
\includegraphics[width=\textwidth]{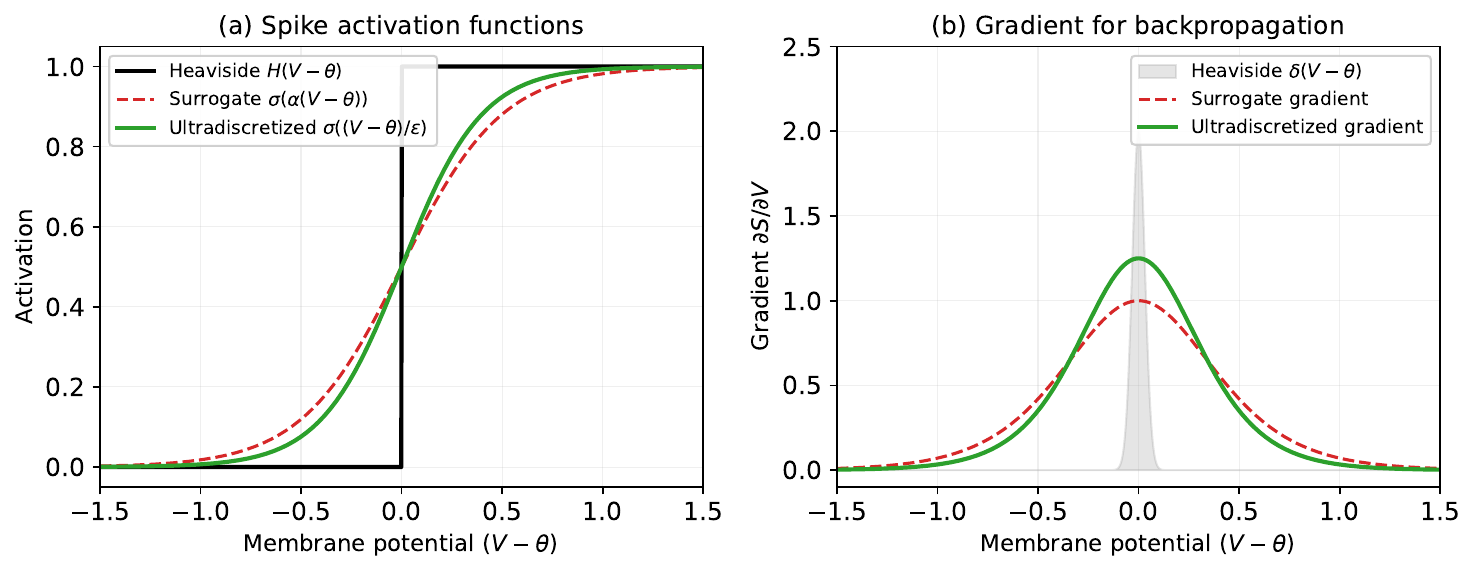}
\caption{\textbf{(a)} Spike activation functions: Heaviside (hard threshold), surrogate gradient (smooth approximation), and ultradiscretized (principled soft relaxation). \textbf{(b)} Gradients: Heaviside has zero gradient almost everywhere (delta function at threshold); surrogate and ultradiscretized provide smooth gradients, but only ultradiscretized maintains forward-backward consistency.}
\label{fig:spike_comparison}
\end{figure*}

This paper proposes \textbf{UltraLIF}, a theoretically grounded alternative based on \emph{ultradiscretization}, a limiting procedure from tropical geometry transforming continuous dynamical systems into discrete max-plus systems while preserving structural properties \citep{tokihiro1996ultra, grammaticos2004discrete}. The key contributions are:

\begin{enumerate}
    \item \textbf{Principled differentiability}: The $\lse$ function provides a natural soft relaxation of the max operation underlying spike generation, with explicit convergence bounds as $\eps \to 0$ (Lemma~\ref{lem:lse_convergence}).

    \item \textbf{Forward-backward consistency}: Unlike surrogate methods, UltraLIF employs identical dynamics in forward and backward passes, eliminating gradient mismatch (Remark~\ref{thm:consistency}).

    \item \textbf{Bounded gradients}: For any $\eps > 0$, gradients remain bounded and non-vanishing, enabling stable optimization (Proposition~\ref{prop:gradients}).

    \item \textbf{Consistent low-timestep improvements}: On six benchmarks spanning static, neuromorphic, and audio modalities, ultradiscretized models improve over surrogate gradient baselines at $T{=}1$, with the largest gains on temporal and event-driven data (+11.22\% SHD, +7.96\% DVS-Gesture, +3.91\% N-MNIST).
\end{enumerate}

\section{Related Work}
\label{sec:related}

\textbf{Surrogate Gradient Methods.} The dominant paradigm for direct SNN training replaces non-differentiable spike gradients with smooth surrogates \citep{neftci2019surrogate}. Common choices include piecewise linear \citep{bellec2018long}, sigmoid \citep{zenke2018superspike}, and arctangent \citep{fang2021incorporating} functions. Recent work introduces learnable surrogate parameters \citep{lian2023learnable} and adaptive shapes \citep{li2021differentiable}. Despite empirical success, the forward-backward mismatch remains theoretically problematic. \citet{gygax2025elucidating} provide partial justification via stochastic neurons, showing surrogate gradients match escape noise derivatives in expectation.

\textbf{Spike Timing Approaches.} SpikeProp \citep{bohte2002error} and variants \citep{mostafa2018supervised, kheradpisheh2020temporal} compute exact gradients with respect to spike times. These methods require careful initialization and struggle with silent neurons. Recent work on exact smooth gradients through spike timing \citep{goltz2021fast} addresses some limitations but remains computationally intensive.

\textbf{ANN-to-SNN Conversion.} An alternative approach trains conventional ANNs then converts to SNNs \citep{cao2015spiking, rueckauer2017conversion, bu2022optimal}. While avoiding direct SNN training, conversion methods typically require many timesteps to achieve ANN accuracy and sacrifice temporal dynamics.

\textbf{Tropical Geometry and Neural Networks.} Tropical geometry studies algebraic structures where addition becomes max (or min) and multiplication becomes addition \citep{maclagan2015introduction}. \citet{zhang2018tropical} establish that ReLU networks compute tropical rational functions. Recent work extends this to graph neural networks \citep{pham2024gnn} and neural network compression \citep{fotopoulos2024tropnnc}. The connection to spiking networks via ultradiscretization appears novel.

\textbf{Ultradiscretization.} Originating in integrable systems \citep{tokihiro1996ultra}, ultradiscretization transforms difference equations into cellular automata preserving solution structure. Applications include soliton systems \citep{takahashi1990soliton} and limit cycle analysis \citep{yamazaki2023emergence, yamazaki2024generalization}. Application to neural networks has not been previously explored.

\section{Preliminaries}
\label{sec:prelim}

\subsection{The Max-Plus Semiring}
\label{sec:maxplus}

\begin{definition}[Max-Plus Semiring]
The \emph{max-plus semiring} is the algebraic structure $(\R \cup \{-\infty\}, \oplus, \odot)$ where:
\begin{itemize}
    \item $a \oplus b := \max(a, b)$ (tropical addition)
    \item $a \odot b := a + b$ (tropical multiplication)
\end{itemize}
with additive identity $-\infty$ and multiplicative identity $0$.
\end{definition}

This semiring underlies tropical geometry and provides the limit structure for ultradiscretization.

\subsection{Log-Sum-Exp as Soft Maximum}
\label{sec:lse}

The log-sum-exp function with temperature $\eps > 0$ is defined as:
\begin{equation}
\lse_\eps(\mathbf{x}) := \eps \log\left(\sum_{i=1}^n e^{x_i/\eps}\right)
\label{eq:lse_def}
\end{equation}

The following lemma establishes its role as a smooth approximation to the maximum.

\begin{lemma}[LSE Convergence]
\label{lem:lse_convergence}
Let $\mathbf{x} = (x_1, \ldots, x_n) \in \R^n$ and $M := \max_i x_i$. Then $M \leq \lse_\eps(\mathbf{x}) \leq M + \eps \log n$, hence $\lim_{\eps \to 0^+} \lse_\eps(\mathbf{x}) = M$. Moreover, $\nabla \lse_\eps(\mathbf{x}) = \softmax(\mathbf{x}/\eps) \in (0,1)^n$.
\end{lemma}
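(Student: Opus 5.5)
The plan is to sandwich the sum inside the logarithm between its largest term and $n$ times that term, then take $\eps\log$ of the inequalities and compute the gradient directly.

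\textbf{Upper and lower bounds.} Fix an index $j$ with $x_j = M$. Every term $e^{x_i/\eps}$ is positive and at most $e^{M/\eps}$, so
\[
e^{M/\eps} \le \sum_{i=1}^n e^{x_i/\eps} \le n\, e^{M/\eps}.
\]
Since $\log$ is increasing and $\eps>0$, multiplying $\log$ of each side by $\eps$ preserves the inequalities. This gives $M \le \lse_\eps(\mathbf{x}) \le M + \eps\log n$. Letting $\eps\to 0^+$, the gap $\eps\log n$ tends to $0$, and the squeeze theorem yields $\lim_{\eps\to0^+}\lse_\eps(\mathbf{x}) = M$.

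\textbf{Gradient.} Write $Z := \sum_k e^{x_k/\eps}$. By the chain rule,
\[
\partial_{x_i}\lse_\eps(\mathbf{x}) = \eps\cdot \frac{1}{Z}\cdot \frac{1}{\eps}e^{x_i/\eps} = \frac{e^{x_i/\eps}}{Z},
\]
which is the $i$-th component of $\softmax(\mathbf{x}/\eps)$. Each component is strictly positive because it is a ratio of positive numbers. For the upper bound, the denominator equals the numerator plus the other $n-1$ positive terms. So for $n\ge 2$ each component is strictly less than $1$, and the gradient lies in $(0,1)^n$.

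\textbf{Main obstacle.} Nothing here is genuinely hard; the only delicate point is the strict upper bound $<1$. When $n=1$ the softmax equals $1$ exactly, so the claim $\nabla\lse_\eps\in(0,1)^n$ fails. I would state explicitly that the open-interval claim assumes $n\ge 2$, or else weaken it to $(0,1]^n$ in the degenerate case. I would also remark, for later use in Proposition~\ref{prop:gradients}, that the components sum to $1$, so the gradient lies in the open probability simplex.
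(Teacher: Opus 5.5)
Your proposal is correct and follows the paper's proof essentially verbatim: the same sandwich $e^{M/\eps}\le\sum_i e^{x_i/\eps}\le n\,e^{M/\eps}$, the squeeze theorem for the limit, and direct differentiation for the softmax gradient. Your caveat that the claim $\nabla\lse_\eps(\mathbf{x})\in(0,1)^n$ needs $n\ge 2$ is valid, since for $n=1$ the gradient is exactly $1$; the paper's proof never justifies the strict bounds, so it misses this.
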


\begin{proof}
For the lower bound, the sum includes $e^{M/\eps}$, so $\lse_\eps(\mathbf{x}) \geq \eps \log(e^{M/\eps}) = M$. For the upper bound, since $x_i \leq M$ for all $i$, $\lse_\eps(\mathbf{x}) \leq \eps \log(n \cdot e^{M/\eps}) = M + \eps \log n$. The limit follows by the squeeze theorem. The gradient formula follows from direct differentiation: $\frac{\partial \lse_\eps}{\partial x_j} = \frac{e^{x_j/\eps}}{\sum_i e^{x_i/\eps}} = \softmax(\mathbf{x}/\eps)_j$.
\end{proof}

\begin{remark}
When the maximum is unique with gap $\delta := M - \max_{i: x_i \neq M} x_i > 0$, the error decays exponentially: $\lse_\eps(\mathbf{x}) - M = O(\eps e^{-\delta/\eps})$.
\end{remark}

\section{Method: Ultradiscretized Spiking Neurons}
\label{sec:method}

The key innovation of this work is applying \emph{ultradiscretization}, a limiting procedure from tropical geometry, to derive differentiable spiking neurons. This section shows that ultradiscretization can be applied to different neural dynamics, yielding distinct models for temporal and spatial computations.

\subsection{Ultradiscretization Framework}
\label{sec:ultradiscret}

Ultradiscretization transforms continuous dynamical systems into max-plus (tropical) systems while preserving structural properties \citep{tokihiro1996ultra}. The procedure operates via the substitution $x = e^{X/\eps}$ followed by the limit $\eps \to 0^+$:
\begin{align}
x + y = e^{X/\eps} + e^{Y/\eps} &\to e^{\max(X,Y)/\eps} \quad \text{(addition $\to$ max)} \label{eq:ultra_add}\\
x \cdot y = e^{X/\eps} \cdot e^{Y/\eps} &= e^{(X+Y)/\eps} \quad \text{(multiplication $\to$ addition)} \label{eq:ultra_mult}
\end{align}

For finite $\eps > 0$, the log-sum-exp function $\lse_\eps$ (Eq.~\ref{eq:lse_def}) provides a differentiable soft relaxation of the max operation. This is the foundation for all ultradiscretized neurons presented here.

\subsubsection{Temporal Dynamics: UltraLIF}
\label{sec:ultralif_temporal}

UltraLIF is derived from the standard single-neuron LIF ordinary differential equation. The membrane potential evolves according to:
\begin{equation}
\tau_m \frac{dv}{dt} = -(v - v_\text{rest}) + R \cdot I(t)
\label{eq:lif_ode}
\end{equation}
where $\tau_m$ is the membrane time constant, $v_\text{rest}$ the resting potential, $R$ the membrane resistance, and $I(t)$ the input current.

Applying forward Euler discretization with timestep $\Delta t$ and setting $v_\text{rest} = 0$ yields:
\begin{equation}
v^{(t+1)} = \underbrace{\left(1 - \frac{\Delta t}{\tau_m}\right)}_{=:\tau_0} v^{(t)} + I^{(t)}
\label{eq:lif_discrete}
\end{equation}
where the leak factor $\tau_0 \in (0,1)$ controls temporal decay.

The ultradiscretization transform \citep{tokihiro1996ultra} proceeds by substituting $v = e^{V/\eps}$, $I = e^{J/\eps}$, and parameterizing the leak as $\tau = e^{T/\eps}$ where $T = \log\tau_0 < 0$:
\begin{equation}
e^{V^{(t+1)}/\eps} = e^{T/\eps} \cdot e^{V^{(t)}/\eps} + e^{J^{(t)}/\eps} = e^{(V^{(t)} + T)/\eps} + e^{J^{(t)}/\eps}
\end{equation}
Taking $\eps \cdot \log$ of both sides and the limit $\eps \to 0^+$ recovers the max-plus dynamics:
\begin{equation}
V^{(t+1)} = \max\left(V^{(t)} + T, \, J^{(t)}\right) = \max\left(V^{(t)} + \log\tau_0, \, J^{(t)}\right)
\label{eq:ultralif_limit}
\end{equation}

For differentiable training, the hard maximum is relaxed to the log-sum-exp for finite $\eps > 0$:
\begin{equation}
V_\eps^{(t+1)} = \lse_\eps\left(V^{(t)} + \log\tau_0, \, I^{(t)}\right)
\label{eq:ultralif_soft}
\end{equation}
This \textbf{2-term LSE} captures temporal membrane integration with learnable leak. Note that the $\eps$-parameterized leak $\tau = e^{T/\eps}$ becomes stronger as $\eps \to 0$, yielding sharper temporal dynamics in the tropical limit.

\subsubsection{Spatial Dynamics: UltraDLIF}
\label{sec:ultradlif_spatial}

An analogous derivation applies ultradiscretization to spatial dynamics, capturing lateral interactions across a neuronal population. Consider a simplified diffusive coupling where membrane potentials spread locally:
\begin{equation}
\frac{\partial v}{\partial t} = D \nabla^2 v
\label{eq:nf_pde}
\end{equation}
where $D > 0$ is the diffusion coefficient. This models gap junction (electrical synapse) coupling, where ionic currents flow directly between neurons enabling voltage spread \citep{connors2004electrical, spek2020neural}. The diffusion equation provides a first-order approximation of such lateral interactions.

Discretizing the Laplacian via finite differences $\nabla^2 v \approx (v_{i-1} - 2v_i + v_{i+1})/\Delta x^2$ and applying forward Euler in time yields:
\begin{equation}
v_i^{(t+1)} = v_i^{(t)} + \frac{D \Delta t}{\Delta x^2}(v_{i-1}^{(t)} - 2v_i^{(t)} + v_{i+1}^{(t)})
\label{eq:nf_discrete}
\end{equation}

At the balanced diffusion regime where $D\Delta t/\Delta x^2 = 1/3$, this simplifies to uniform spatial averaging where each neuron and its neighbors contribute equally:
\begin{equation}
v_i^{(t+1)} = \frac{1}{3}v_{i-1}^{(t)} + \frac{1}{3}v_i^{(t)} + \frac{1}{3}v_{i+1}^{(t)}
\label{eq:nf_uniform}
\end{equation}
This choice of $1/3$ lies within the von Neumann stability bound for explicit finite difference schemes applied to the 1D diffusion equation, which requires $D\Delta t/\Delta x^2 \leq 1/2$ for numerical stability. The value $1/3$ ensures stability while providing symmetric treatment of a neuron and its immediate neighbors, a natural balance for lateral coupling.

\textbf{Remark on subtraction.} Standard ultradiscretization cannot handle subtraction directly, as there is no tropical analog of $x - y$ in the max-plus semiring \citep{ochiai2005inversible}. This limitation is circumvented by selecting $\alpha = 1/3$: expanding Eq.~\eqref{eq:nf_discrete} gives coefficients $\alpha$, $(1{-}2\alpha)$, $\alpha$ for the three terms, and at $\alpha = 1/3$ all become $1/3 > 0$, eliminating subtraction entirely (Eq.~\eqref{eq:nf_uniform}). For more general diffusion regimes, inversible max-plus algebras extend the framework to handle subtraction via $x - y \to \max(X, Y + \eta)$, where $\eta$ is an inverse element \citep{ochiai2005inversible}.

Applying the ultradiscretization transform with $v = e^{V/\eps}$ and taking $\eps \to 0^+$:
\begin{equation}
V_i^{(t+1)} = \max\left(V_{i-1}^{(t)}, V_i^{(t)}, V_{i+1}^{(t)}\right)
\label{eq:ultradlif_limit}
\end{equation}
This limit corresponds to \emph{morphological dilation}, a max-pooling operation over the spatial neighborhood.

Relaxing the hard maximum to the log-sum-exp for finite $\eps > 0$ gives:
\begin{equation}
V_{i,\eps}^{(t+1)} = \lse_\eps\left(V_{i-1}^{(t)}, V_i^{(t)}, V_{i+1}^{(t)}\right)
\label{eq:ultradlif_soft}
\end{equation}
This \textbf{3-term LSE} captures lateral spatial smoothing across neurons. External input $I_i^{(t)}$ is added separately (Eq.~\ref{eq:ultradlif_voltage}), following the standard neural field convention where diffusion handles lateral coupling and an additive term represents external drive.

\subsubsection{Comparison of Derivations}

\begin{table}[h]
\centering
\begin{small}
\begin{tabular}{lcc}
\toprule
& \textbf{UltraLIF} & \textbf{UltraDLIF} \\
\midrule
Source & LIF ODE & Diffusion PDE \\
Equation & $dv/dt = -v/\tau_m + I$ & $\partial v/\partial t = D\nabla^2 v$ \\
LSE terms & 2 (temporal) & 3 (spatial) \\
Soft form & $\lse(V{+}\log\tau_0, I)$ & $\lse(V_{-1}, V_0, V_{+1})$ \\
Models & Membrane decay & Lateral diffusion \\
\bottomrule
\end{tabular}
\end{small}
\caption{Ultradiscretization applied to temporal and spatial dynamics.}
\label{tab:derivations}
\end{table}

Both models share the same theoretical foundation (ultradiscretization, LSE soft relaxation) but capture different biological phenomena. UltraLIF models single-neuron temporal dynamics; UltraDLIF models population-level spatial interactions.

\subsection{Neuron Models}
\label{sec:neuron_models}

Building on the ultradiscretization framework, complete neuron models are defined incorporating spike generation and reset mechanisms. Both variants share the same spike and reset logic, differing only in membrane dynamics.

\begin{definition}[UltraLIF Neuron (Temporal)]
\label{def:ultralif}
For temperature $\eps > 0$, leak factor $\tau_0 \in (0,1)$, and threshold $\theta > 0$:
\begin{align}
\tilde{V}_\eps^{(t+1)} &= \lse_\eps\left(V_\eps^{(t)} + \log\tau_0, \, I^{(t)}\right) \label{eq:ultralif_voltage}\\
s_\eps^{(t+1)} &= \sigma\left(\frac{\tilde{V}_\eps^{(t+1)} - \theta}{\eps}\right) \label{eq:ultralif_spike}\\
V_\eps^{(t+1)} &= \tilde{V}_\eps^{(t+1)} \cdot (1 - s_\eps^{(t+1)}) + V_\text{reset} \cdot s_\eps^{(t+1)} \label{eq:ultralif_reset}
\end{align}
where $\sigma(z) = (1 + e^{-z})^{-1}$ is the logistic sigmoid and $V_\text{reset} = 0$.
\end{definition}

\begin{definition}[UltraDLIF Neuron (Spatial)]
\label{def:ultradlif}
For temperature $\eps > 0$, threshold $\theta > 0$, and neuron index $i$:
\begin{align}
\tilde{V}_{i,\eps}^{(t+1)} &= \lse_\eps\left(V_{i-1,\eps}^{(t)}, V_{i,\eps}^{(t)}, V_{i+1,\eps}^{(t)}\right) + I_i^{(t)} \label{eq:ultradlif_voltage}\\
s_{i,\eps}^{(t+1)} &= \sigma\left(\frac{\tilde{V}_{i,\eps}^{(t+1)} - \theta}{\eps}\right) \label{eq:ultradlif_spike}\\
V_{i,\eps}^{(t+1)} &= \tilde{V}_{i,\eps}^{(t+1)} \cdot (1 - s_{i,\eps}^{(t+1)}) + V_\text{reset} \cdot s_{i,\eps}^{(t+1)} \label{eq:ultradlif_reset}
\end{align}
The LSE operates over the spatial neighborhood (circular boundary conditions).
\end{definition}

\textbf{Soft Spike Mechanism.} The soft spike $s_\eps \in (0,1)$ interpolates between no-spike ($s_\eps \approx 0$) and spike ($s_\eps \approx 1$). This differs fundamentally from surrogate gradient methods:
\begin{itemize}
    \item \textbf{Surrogate}: Forward uses hard $H(V - \theta)$; backward uses smooth $g'(V)$
    \item \textbf{Ultradiscretized}: Both forward and backward use same smooth $\sigma((V-\theta)/\eps)$
\end{itemize}


\textbf{On Soft vs.\ Binary Spikes.} A natural concern is that $s_\eps \in (0,1)$ does not represent ``true'' binary spikes. However, this deviation from binary behavior is not problematic in practice. The soft spike is used during training for gradient computation, while at inference one can employ small $\eps$ or hard thresholding ($s = H(V - \theta)$) for neuromorphic deployment, following a standard training-inference separation analogous to dropout or batch normalization. Furthermore, the output layer uses mean spike rate $\hat{y} = \frac{1}{T}\sum_t s^{(t)}$, which is inherently robust to soft versus hard individual spikes since the classification decision depends on aggregate activity rather than precise spike values. The sparsity penalty $\lambda \cdot \bar{s}$ encourages sparse soft activations that correspond to sparse hard spikes in the limit, ensuring energy-efficient inference. Most importantly, Proposition~\ref{prop:convergence} guarantees that $s_\eps \to H(V - \theta)$ as $\eps \to 0$, providing a principled path from soft training dynamics to binary inference behavior.

\textbf{Learnable Parameters.} The temperature $\eps$ is made learnable via $\eps = \exp(\log\eps_\text{param})$, initialized to $\eps_0 = 1.0$. For UltraLIF, the leak factor $\tau_0$ can also be learned (UltraPLIF variant). For UltraDLIF, similarly UltraDPLIF learns $\tau_0$ for an optional temporal component. During training, the network discovers optimal soft-to-hard trade-offs, implementing automatic curriculum learning.

\subsection{Network Architecture}
\label{sec:architecture}

A feedforward SNN with $L$ layers of UltraLIF neurons is constructed as:
\begin{align}
I_l^{(t)} &= W_l \cdot s_{l-1}^{(t)} + b_l \\
V_l^{(t+1)} &= \lse_\eps\left(V_l^{(t)} + \log\tau_0, \, I_l^{(t)}\right)(1 - s_l^{(t)}) + V_\text{reset} \cdot s_l^{(t)} \\
s_l^{(t+1)} &= \sigma\left((V_l^{(t+1)} - \theta) / \eps\right)
\end{align}
where $l \in \{1, \ldots, L\}$ indexes layers. The output layer employs spike rate coding:
\begin{equation}
\hat{y} = \frac{1}{T}\sum_{t=1}^T s_L^{(t)}
\end{equation}

\section{Theoretical Analysis}
\label{sec:theory}

\subsection{Convergence to LIF Dynamics}

\begin{lemma}[Sigmoid Convergence]
\label{lem:sigmoid}
Let $\sigma_\eps(x) := \sigma(x/\eps)$. For $x \neq 0$, $\lim_{\eps \to 0^+} \sigma_\eps(x) = H(x)$ with exponential convergence rate $|\sigma_\eps(x) - H(x)| \leq e^{-|x|/\eps}$.
\end{lemma}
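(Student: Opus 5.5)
The plan is to split on the sign of $x$ and, in each case, write the distance to the Heaviside value as an explicit closed form. The one algebraic fact used throughout is the symmetry $1-\sigma(z)=\sigma(-z)$, which follows from
\[
1-\frac{1}{1+e^{-z}}=\frac{e^{-z}}{1+e^{-z}}=\frac{1}{1+e^{z}} .
\]

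For $x>0$ we have $H(x)=1$. The symmetry then gives
\[
|\sigma_\eps(x)-H(x)|=1-\sigma(x/\eps)=\sigma(-x/\eps)=\frac{1}{1+e^{x/\eps}} .
\]
For $x<0$ we have $H(x)=0$, and directly
\[
|\sigma_\eps(x)-H(x)|=\sigma(x/\eps)=\frac{1}{1+e^{-x/\eps}} .
\]
Both cases combine into the single identity
\[
|\sigma_\eps(x)-H(x)|=\frac{1}{1+e^{|x|/\eps}} ,
\]
valid for every $x\neq 0$.

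The bound now follows from $1+e^{u}>e^{u}$, applied with $u=|x|/\eps$:
\[
\frac{1}{1+e^{|x|/\eps}}<e^{-|x|/\eps}.
\]
This is the stated exponential rate, in fact with strict inequality. For fixed $x\neq 0$, the right-hand side tends to $0$ as $\eps\to 0^+$, because $|x|/\eps\to+\infty$. The pointwise limit $\lim_{\eps\to0^+}\sigma_\eps(x)=H(x)$ therefore follows by squeezing.

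There is no genuine obstacle here. The only point needing care is the exclusion of $x=0$: there $\sigma_\eps(0)=1/2$ for all $\eps$, so no convergence to a Heaviside value can hold without fixing a convention for $H(0)$. This is why the hypothesis $x\neq 0$ is essential. The rate also degrades as $|x|\to0$, so the convergence is not uniform near the threshold. If the later convergence results (Proposition~\ref{prop:convergence}) need uniformity, it would have to be imposed by requiring $|x|\ge\delta>0$, which yields the uniform bound $e^{-\delta/\eps}$.
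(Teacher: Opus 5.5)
Your proof is correct and takes the same approach as the paper: split on the sign of $x$, write the distance to $H(x)$ in closed form, and bound $1/(1+e^{u})$ by $e^{-u}$ with $u=|x|/\eps$. Using $1-\sigma(z)=\sigma(-z)$ to merge both cases into $|\sigma_\eps(x)-H(x)|=(1+e^{|x|/\eps})^{-1}$ is a tidier way of packaging the paper's two separate computations, and it gives the inequality as strict.
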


\begin{proof}
For $x > 0$: $\sigma_\eps(x) = (1 + e^{-x/\eps})^{-1} \to 1$ as $\eps \to 0^+$.

For $x < 0$: $\sigma_\eps(x) = e^{x/\eps}/(e^{x/\eps} + 1) \to 0$ as $\eps \to 0^+$.

The error bound follows from $|1 - \sigma_\eps(x)| = e^{-x/\eps}/(1 + e^{-x/\eps}) \leq e^{-x/\eps} = e^{-|x|/\eps}$ for $x > 0$. For $x < 0$: $|\sigma_\eps(x)| = 1/(1 + e^{-x/\eps}) \leq e^{x/\eps} = e^{-|x|/\eps}$.
\end{proof}

\begin{proposition}[Convergence to LIF]
\label{prop:convergence}
Let $\{V_\eps(t)\}$ denote the UltraLIF trajectory with temperature $\eps > 0$, and $\{v(t), s(t)\}$ the standard LIF trajectory with $V_\mathrm{reset} = 0$. Assume (A1)~bounded inputs $|I(t)| \leq I_{\max}$ and (A2)~threshold margin $\delta_t := |v(t) - \theta| > 0$. Then $\lim_{\eps \to 0^+} V_\eps(t) = v(t)$ and $\lim_{\eps \to 0^+} s_\eps(t) = s(t)$ for each $t$, with $|V_\eps(t) - v(t)| \leq t \cdot \eps \log 2$ and $|s_\eps(t) - s(t)| \leq e^{-\delta_t/\eps}$. The linear error growth follows from the 1-Lipschitz property of $\lse_\eps$ (Lemma~\ref{lem:lipschitz}) and the non-expansiveness of the reset interpolation. The set of inputs violating (A2) has Lebesgue measure zero.
\end{proposition}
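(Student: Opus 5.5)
The plan is an induction on $t$ that tracks the voltage error $e_t := |V_\eps(t) - v(t)|$ and the spike error in parallel. The reference trajectory is the tropical LIF: I read $v(t)$ as the max-plus limit of Eq.~\eqref{eq:ultralif_limit}, namely $\tilde v^{(t+1)} = \max(v^{(t)} + \log\tau_0, I^{(t)})$, followed by $s = H(\tilde v - \theta)$ and a hard reset to $V_\mathrm{reset}=0$. This identification is forced: for the additive Euler recursion \eqref{eq:lif_discrete} the UltraLIF update does not converge to $\tau_0 v + I$ at all. The base case is $V_\eps(0) = v(0)$, so $e_0 = 0$.

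\textbf{Inductive step, membrane potential.} Write
\[
\lse_\eps(V_\eps + \log\tau_0, I) - \max(v + \log\tau_0, I)
\]
as the sum of two differences. The first is $\lse_\eps(V_\eps + \log\tau_0, I) - \lse_\eps(v + \log\tau_0, I)$, which is at most $e_t$ in absolute value by the 1-Lipschitz property of $\lse_\eps$ in the sup norm (Lemma~\ref{lem:lipschitz}, which follows since the gradient is a probability vector by Lemma~\ref{lem:lse_convergence}). The second is $\lse_\eps(v + \log\tau_0, I) - \max(v + \log\tau_0, I)$, which lies in $[0, \eps\log 2]$ by Lemma~\ref{lem:lse_convergence} with $n = 2$. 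Together these give $|\tilde V_\eps - \tilde v| \le e_t + \eps\log 2$, and hence pre-reset error at most $(t+1)\eps\log 2$. The bounded-input assumption (A1) keeps all quantities finite and uniformly bounded. This is what I need for the reset step.

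\textbf{Inductive step, spike.} By (A2) the limit pre-reset potential sits at distance $\delta_t$ from $\theta$. Once $\eps$ is small enough that the accumulated voltage error is below $\delta_t/2$, the argument $\tilde V_\eps - \theta$ has the same sign as $\tilde v - \theta$ and magnitude at least $\delta_t - O(t\eps)$. Lemma~\ref{lem:sigmoid} then gives $|s_\eps - s| \le e^{-(\delta_t - O(t\eps))/\eps}$. This becomes $e^{-\delta_t/\eps}$ only up to a constant factor $e^{O(t)}$, and I would state it in that form or absorb the factor. Convergence $s_\eps \to s$ follows in either case.

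\textbf{Reset step (main obstacle).} The post-reset value is $\tilde V(1-s)$. Its deviation from $\tilde v(1-s_{\mathrm{hard}})$ is $(1-s_\eps)(\tilde V_\eps - \tilde v) - \tilde v(s_\eps - s)$. The first term is controlled by $|\tilde V_\eps - \tilde v|$ since $1 - s_\eps \in (0,1)$; this is the ``non-expansiveness.'' The second term is $|\tilde v|\, e^{-\delta_t/\eps}$, which is bounded using (A1) and is exponentially small. I therefore expect the clean bound $t\,\eps\log 2$ to hold only asymptotically, up to $O(e^{-c/\eps})$ corrections that are dominated by $\eps\log 2$ for small $\eps$. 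I would present the bound with this correction, or for $\eps \le \eps_0(t)$. The limits $V_\eps(t) \to v(t)$ follow regardless.

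\textbf{Measure-zero claim.} For fixed $t$, $v(t)$ is a piecewise-affine (max-plus) function of the finitely many inputs $I(0), \dots, I(t-1)$. Each set $\{v(t) = \theta\}$ is therefore a finite union of pieces of affine hyperplanes, or of regions where $v$ is constant in the inputs. I would handle the latter via the reset branch: after a spike, the value $\max(\log\tau_0, I)$ equals $\theta$ only when $I = \theta$, and $\log\tau_0 < 0 < \theta$. A countable union over $t$ of such null sets is null.
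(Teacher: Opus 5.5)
Your proposal follows the paper's own route: strong induction, 1-Lipschitz propagation of $\lse_\eps$ plus the $\eps\log 2$ gap per step, the reset treated as a convex combination, and Lemma~\ref{lem:sigmoid} for the spikes. Your reading of $v$ as the max-plus recursion is also exactly what the paper's proof uses when it writes $F_\eps \to \max(v(t)+\log\tau_0, I(t)) = v(t+1)$.

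Your caveats are correct, and they expose defects in the stated constants rather than in your argument. The paper gets $t\,\eps\log 2$ only by declaring the reset non-expansive. That silently drops your term $\tilde v\,(s - s_\eps)$; in fact the map $\tilde V \mapsto \tilde V\,(1-\sigma((\tilde V-\theta)/\eps))$ has slope $1/2 - \theta/(4\eps)$ at threshold. The paper gets $e^{-\delta_t/\eps}$ by applying Lemma~\ref{lem:sigmoid} at the limit margin instead of at $\tilde V_\eps$. Concretely, take $\theta = 0.5$, $\tau_0 = 0.9$, $v(0) = 0$, $\eps = 0.1$:
\begin{itemize}
\item The input $I(0) = 0.4$ has $\delta_1 = 0.1$, yet $|V_\eps(1) - v(1)| \approx 0.108 > \eps\log 2 \approx 0.069$.
\item The input $I(0) = \log\tau_0$ gives $|s_\eps(1) - s(1)| \approx 2e^{-\delta_1/\eps}$.
\end{itemize}
At $t=1$ these examples hold whether $v$ is read additively or in max-plus form.

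Two small corrections are needed.

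First, your fallback of keeping the clean bound for $\eps \le \eps_0(t)$ fails. For $I(0) = \log\tau_0$ the LSE gap is exactly $\eps\log 2$, and $\tilde V_\eps(1) = \log\tau_0 + \eps\log 2 < 0$ for small $\eps$. The soft reset then pushes it up by a further $s_\eps\,|\tilde V_\eps(1)| > 0$. So $|V_\eps(1) - v(1)| > \eps\log 2$ for every small $\eps$, even though (A2) holds with margin $\theta - \log\tau_0$. Keep only the additive form $e_{t+1} \le e_t + \eps\log 2 + |\tilde v(t+1)|\,|s_\eps(t+1) - s(t+1)|$. Note that the last factor carries the same $e^{O(t)}$ loss as your spike bound, not $e^{-\delta_t/\eps}$ as you wrote.

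Second, your measure-zero sketch handles only one leak step after a reset. The input-independent branches of $\tilde v(t)$ are:
\begin{itemize}
\item $v(0) + k\log\tau_0$, when no input has yet dominated;
\item $m\log\tau_0$ for any $m \ge 1$ leak steps after a reset.
\end{itemize}
All of these stay below $\theta$ under an assumption such as $v(0) = 0$, which you need to state. With that assumption, for each $t$ the violating inputs lie in the finitely many hyperplanes $\{I(k) = \theta - j\log\tau_0\}$. The paper gives no argument for this claim at all.
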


\begin{proof}
By strong induction on $t$. Base case: $V_\eps(0) = v(0)$; spike convergence by Lemma~\ref{lem:sigmoid}. For the inductive step, the update $F_\eps(V) = \tilde{V}(1 - s_\eps) + V_\mathrm{reset} \cdot s_\eps$ with $\tilde{V} = \lse_\eps(V + \log\tau_0, I)$ satisfies: (i)~no spike ($s_\eps \to 0$): $F_\eps \to \max(v(t) + \log\tau_0, I(t)) = v(t+1)$; (ii)~spike ($s_\eps \to 1$): $F_\eps \to V_\mathrm{reset} = 0$, matching LIF reset. Each step adds at most $\eps\log 2$ error (Lemma~\ref{lem:lse_convergence}); the convex reset interpolation does not amplify it. Full details in Appendix~\ref{app:proofs}.
\end{proof}

\begin{corollary}[UltraDLIF Convergence]
\label{cor:ultradlif_convergence}
The analogous result holds for UltraDLIF: as $\eps \to 0^+$, the 3-term $\lse_\eps(V_{i-1}, V_i, V_{i+1}) \to \max(V_{i-1}, V_i, V_{i+1})$ by Lemma~\ref{lem:lse_convergence} with $n{=}3$, and the full UltraDLIF trajectory converges to the max-plus diffusion dynamics (Eq.~\ref{eq:ultradlif_limit}) with error bound $|V_{i,\eps}(t) - V_i(t)| \leq t \cdot \eps \log 3$.
\end{corollary}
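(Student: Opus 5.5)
The plan mirrors the proof of Proposition~\ref{prop:convergence}, with the two-term $\lse_\eps$ replaced by the three-term one. I compare the UltraDLIF trajectory $\{V_{i,\eps}(t)\}$ with the limiting max-plus diffusion trajectory $\{V_i(t)\}$ started from the same initial state, $V_{i,\eps}(0)=V_i(0)$ for all $i$. The limit includes the same input $I_i$ and the hard-threshold reset. The argument is an induction on $t$ for the uniform quantity $e_t := \max_i |V_{i,\eps}(t) - V_i(t)|$, and the target is $e_{t+1} \le e_t + \eps\log 3$. Since $e_0 = 0$, this gives $e_t \le t\,\eps\log 3$. Taking the maximum over $i$ is essential, because the spatial coupling mixes errors from neighbouring sites and a per-site induction would not close.

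\textbf{Step 1 (one-step pre-reset error).} Write $\tilde V_{i,\eps}(t+1) - \tilde V_i(t+1)$ as
\begin{equation*}
\bigl[\lse_\eps(\mathbf{V}_\eps) - \lse_\eps(\mathbf{V})\bigr] + \bigl[\lse_\eps(\mathbf{V}) - \max(\mathbf{V})\bigr],
\end{equation*}
where $\mathbf{V}_\eps = (V_{i-1,\eps}, V_{i,\eps}, V_{i+1,\eps})(t)$ and $\mathbf{V}$ is the analogous neighbourhood of the limit. The inputs $I_i(t)$ cancel.
\begin{itemize}
\item The first bracket is at most $\|\mathbf{V}_\eps - \mathbf{V}\|_\infty \le e_t$. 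This follows from the $1$-Lipschitz property of $\lse_\eps$ in $\ell^\infty$ (Lemma~\ref{lem:lipschitz}), which can also be read off from Lemma~\ref{lem:lse_convergence}: the gradient is a softmax vector with entries summing to one.
\item The second bracket lies in $[0, \eps\log 3]$ by Lemma~\ref{lem:lse_convergence} with $n=3$.
\end{itemize}
Circular boundary conditions ensure every site has exactly three neighbours, so the bound is uniform in $i$.

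\textbf{Step 2 (reset).} The post-reset value is $F(\tilde V, s) = \tilde V(1-s)$, since $V_\mathrm{reset}=0$. I treat this exactly as in Proposition~\ref{prop:convergence}. Under the margin assumption (A2), applied to $\tilde V_i(t+1)$ for each $i$, the limit spike is $H(\tilde V_i - \theta)$. The soft spike differs from it by at most $e^{-\delta/\eps}$ by Lemma~\ref{lem:sigmoid}.
\begin{itemize}
\item In the no-spike case, $F$ is a convex interpolation toward $\tilde V$, so the error is the pre-reset error plus a term of order $|\tilde V|\,e^{-\delta/\eps}$.
\item In the spike case, both values are pushed to $0$.
\end{itemize}
This yields $e_{t+1} \le e_t + \eps\log 3$, up to exponentially small terms that are dominated as $\eps\to 0$. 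The limit statement then follows by letting $\eps \to 0^+$.

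\textbf{Main obstacle.} The hard part is Step 2. The reset interpolation is not literally non-expansive: when $s_\eps$ is near $1/2$, or when the soft trajectory has not yet entered the margin region, a small voltage error can flip the spike decision. I would handle this as the paper does for UltraLIF. I take the margin $\delta_t$ along the limit trajectory and require $\eps$ small enough that $t\,\eps\log 3 < \delta_t/2$. The soft voltage then lies on the same side of $\theta$, and the sigmoid correction is $O(e^{-\delta_t/(2\eps)})$. I would either absorb these exponentially small terms into the stated bound or state the bound asymptotically, consistent with how the appendix proof of Proposition~\ref{prop:convergence} is phrased.
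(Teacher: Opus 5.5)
Your proposal is correct and follows the paper's route. That route is the proof of Proposition~\ref{prop:convergence} transplanted to $n=3$: induction on $t$, the $1$-Lipschitz property of $\lse_\eps$ in $\|\cdot\|_\infty$, an additive $\eps\log 3$ per step from Lemma~\ref{lem:lse_convergence}, and the reset controlled by Lemma~\ref{lem:sigmoid} under the margin assumption. You are more careful than the paper in two places: you induct on the site-uniform error $e_t$, and you recognize that the reset step only closes up to exponentially small terms, which the appendix drops silently via ``$\approx$''.

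Of your two options, take the asymptotic one, because absorbing these terms into the exact bound is impossible. Take $V_i(0)=0$, $I_i(0)=-1$, $\theta=1/2$, so that (A1) and (A2) hold. Then $V_{i,\eps}(1)-V_i(1)=\eps\log 3+s_{i,\eps}(1)\,(1-\eps\log 3)>\eps\log 3$ for every $\eps<1/\log 3$. What is actually provable is therefore $|V_{i,\eps}(t)-V_i(t)|\le t\,\eps\log 3+O(e^{-c/\eps})$ for $\eps$ small enough.
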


\subsection{Gradient Properties}

\begin{proposition}[Bounded Non-Vanishing Gradients]
\label{prop:gradients}
For any $\eps > 0$, consider a single-step spike output $s_\eps = \sigma((\tilde{V}_\eps - \theta)/\eps)$ where $\tilde{V}_\eps$ is the pre-reset voltage. The gradient satisfies $0 < \frac{\partial s_\eps}{\partial \tilde{V}_\eps} \leq \frac{1}{4\eps}$ for all finite $\tilde{V}_\eps$. For weights $W$ with input $x$ at a single layer: $\left|\frac{\partial s_\eps}{\partial W_{ij}}\right| \leq \frac{\|x\|_\infty}{4\eps}$.
\end{proposition}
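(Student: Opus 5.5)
The argument is a chain-rule computation that uses only the elementary derivative identity $\sigma'(z) = \sigma(z)(1-\sigma(z))$ for the logistic sigmoid.

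First, with $z := (\tilde{V}_\eps - \theta)/\eps$, differentiation gives
\begin{equation*}
\frac{\partial s_\eps}{\partial \tilde{V}_\eps} = \frac{1}{\eps}\,\sigma(z)\bigl(1-\sigma(z)\bigr).
\end{equation*}
For finite $\tilde{V}_\eps$ the argument $z$ is finite, so $\sigma(z) \in (0,1)$ strictly. This gives strict positivity, i.e.\ the gradient is non-vanishing. For the upper bound, the product $p(1-p)$ on $(0,1)$ is maximized at $p = 1/2$ with value $1/4$ (AM--GM). This gives the bound $1/(4\eps)$, attained exactly at $\tilde{V}_\eps = \theta$.

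Second, for the weight gradient, interpret the single-layer setting as the pre-activation being affine in $W$: $\tilde{V}_{\eps,i}$ depends on $W_{ij}$ through the input current $I_i = \sum_k W_{ik} x_k + b_i$. Then $\partial I_i/\partial W_{ij} = x_j$, so $|\partial I_i/\partial W_{ij}| \leq \|x\|_\infty$. It remains to bound $\partial \tilde{V}_{\eps,i}/\partial I_i$.

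\begin{itemize}
\item \emph{UltraLIF, one step.} Here $\tilde{V} = \lse_\eps(V + \log\tau_0, I)$ with $V$ independent of the current $W$ at this step. By Lemma~\ref{lem:lse_convergence} the partial derivative with respect to $I$ is a softmax component, so it lies in $(0,1)$.
\item \emph{UltraDLIF.} The input enters additively, so this derivative equals $1$.
\end{itemize}

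In either case the chain rule gives
\begin{equation*}
\left|\frac{\partial s_\eps}{\partial W_{ij}}\right| \le \frac{1}{4\eps}\cdot 1 \cdot |x_j| \le \frac{\|x\|_\infty}{4\eps}.
\end{equation*}
The weight gradient is also nonzero whenever $x_j \neq 0$, since both factors are strictly positive.

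The only point needing care is the second step: specifying exactly what is held fixed. To keep the chain rule to a single path, I would state explicitly that the previous voltage $V_\eps^{(t)}$ and the input $x$ are treated as constants with respect to $W$, which is the single-step, single-layer setting of the proposition. Without this, recurrent dependence through earlier timesteps would add extra terms. Those terms would also be bounded via the 1-Lipschitz property of $\lse_\eps$ (Lemma~\ref{lem:lipschitz}), but the constant would grow with $t$. The remaining steps are routine.
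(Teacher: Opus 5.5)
Your proof is correct and matches the paper's argument. Like the paper, you bound $\sigma(z)\bigl(1-\sigma(z)\bigr)/\eps$ above by $1/(4\eps)$, get positivity from $\sigma(z)\in(0,1)$, and apply the chain rule $\partial s_\eps/\partial W_{ij} = (\partial s_\eps/\partial \tilde{V}_\eps)(\partial \tilde{V}_\eps/\partial I_i)\,x_j$ with the middle factor at most $1$. You are slightly more careful than the paper in treating UltraDLIF separately, where that factor is exactly $1$ rather than a softmax component, and in stating the single-step convention explicitly.

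One aside is off, though it lies outside what the proposition claims. Your closing remark suggests the multi-step terms are controlled by the $1$-Lipschitz property of $\lse_\eps$ alone. That is not quite right, because the reset map $\tilde{V}\mapsto \tilde{V}\bigl(1-s_\eps(\tilde{V})\bigr)$ has derivative $(1-s_\eps) - \tilde{V}\,s_\eps(1-s_\eps)/\eps$, which can exceed $1$ in magnitude when $\eps$ is small relative to $\theta$.
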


\begin{proof}
Let $z = (V_\eps - \theta)/\eps$. Then $\frac{\partial s_\eps}{\partial V_\eps} = \frac{\sigma(z)(1-\sigma(z))}{\eps}$. The function $\sigma(1-\sigma)$ achieves maximum $1/4$ at $\sigma = 1/2$, establishing the upper bound. Positivity follows since $\sigma(z) \in (0,1)$ for finite $z$. The weight gradient bound follows by chain rule: $\frac{\partial s_\eps}{\partial W_{ij}} = \frac{\partial s_\eps}{\partial V_\eps} \cdot \frac{\partial V_\eps}{\partial I} \cdot x_j$, with the $\lse$ gradient being softmax with values in $(0,1)$.
\end{proof}

\begin{corollary}[Gradient Scaling]
\label{cor:scaling}
The temperature $\eps$ controls the bias-variance trade-off in gradients:
\begin{itemize}
    \item Larger $\eps$: smaller gradients, smoother optimization landscape
    \item Smaller $\eps$: larger gradients near threshold, better LIF approximation
\end{itemize}
\end{corollary}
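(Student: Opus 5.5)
The plan is to read Corollary~\ref{cor:scaling} as a quantitative statement about the map $\eps \mapsto \partial s_\eps/\partial \tilde{V}_\eps$, built from the explicit derivative in Proposition~\ref{prop:gradients}. With $u := \tilde{V}_\eps - \theta$, we have
\[
g_\eps(u) := \frac{\partial s_\eps}{\partial \tilde V_\eps} = \frac{1}{\eps}\,\sigma(u/\eps)\bigl(1-\sigma(u/\eps)\bigr),
\]
and each bullet becomes a monotonicity or limit property of $g_\eps$.

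\textbf{Small $\eps$.} At the threshold, $g_\eps(0) = 1/(4\eps)$, which attains the bound of Proposition~\ref{prop:gradients} and diverges as $\eps \to 0^+$. This gives ``larger gradients near threshold.'' I would make the region precise as $|u| \lesssim \eps$. There $\sigma(1-\sigma)$ is bounded below by a constant $c>0$, so $g_\eps \ge c/\eps$. For the ``better LIF approximation'' half, I would simply cite Proposition~\ref{prop:convergence} and Lemma~\ref{lem:sigmoid}. The trajectory error is $t\,\eps\log 2$ and the spike error is $e^{-\delta_t/\eps}$, and both decrease monotonically in $\eps$.

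\textbf{Large $\eps$.} The uniform bound $\sup_u g_\eps(u) = 1/(4\eps)$ decreases in $\eps$, so gradients are smaller. For smoothness, I would compute $\partial_u g_\eps = \eps^{-2}\sigma(1-\sigma)(1-2\sigma)$. Its supremum is $O(\eps^{-2})$, so the gradient map is $O(\eps^{-2})$-Lipschitz and its variation shrinks as $\eps$ grows. By the chain rule and the softmax bounds in Lemma~\ref{lem:lse_convergence}, the same scaling transfers to the weight gradients, giving the factor $\|x\|_\infty/(4\eps)$.

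\textbf{Obstacle.} The hard step is justifying the words ``bias--variance.'' The corollary is informal, so I would define both terms explicitly. I would take \emph{bias} to mean the deviation of the relaxed dynamics from hard LIF, which is increasing in $\eps$ by Proposition~\ref{prop:convergence}. I would take \emph{variance} to mean the magnitude and sensitivity of the gradient, measured by the $\sup$ and Lipschitz bounds above, which are decreasing in $\eps$. The trade-off then follows from these opposite monotonicities. I expect only this interpretive choice to need care; the calculus itself is routine.

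Away from threshold, with $|u|$ fixed, $g_\eps(u) \approx e^{-|u|/\eps}/\eps \to 0$ as $\eps \to 0$. So smaller $\eps$ concentrates the gradient near threshold rather than enlarging it everywhere. I would record this to qualify the phrase ``near threshold'' in the statement.
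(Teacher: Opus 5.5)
Your proposal is correct and takes the approach the paper implies. The paper gives no separate proof: it treats the corollary as a direct reading of the $1/(4\eps)$ bound in Proposition~\ref{prop:gradients} together with the $\eps$-dependent error bounds of Proposition~\ref{prop:convergence} and Lemma~\ref{lem:sigmoid}. Your near-threshold lower bound $g_\eps \ge c/\eps$, the $O(\eps^{-2})$ Lipschitz estimate, and your explicit definitions of ``bias'' and ``variance'' make that informal reading precise.

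One wording slip to fix. The bounds $t\,\eps\log 2$ and $e^{-\delta_t/\eps}$ are \emph{increasing} in $\eps$ (they shrink as $\eps \to 0^+$), not ``decreasing in $\eps$''; your later discussion of bias states this correctly. Also, since these are only upper bounds, ``better LIF approximation'' should be read in the sense of the bounds.
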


\subsection{Forward-Backward Consistency}

\begin{remark}[Gradient Consistency]
\label{thm:consistency}
For any $\eps > 0$, UltraLIF is a composition of smooth operations ($\lse_\eps$, sigmoid, affine maps), so the chain rule applies exactly: the backward pass differentiates the same function computed forward. Surrogate methods use $H(V{-}\theta)$ forward but differentiate a surrogate $g(V{-}\theta)$ backward; \citet{gygax2025elucidating} show this can be interpreted as differentiating a stochastic forward pass with escape noise. UltraLIF avoids this mismatch: $\nabla_W \mathcal{L}(f_\eps(\mathbf{x}; W))$ is an exact gradient of the actual forward computation.
\end{remark}

\subsection{Connection to Tropical Geometry}

\begin{theorem}[Tropical Limit]
\label{thm:tropical}
The map $D_\eps: (\R_{>0}, +, \cdot) \to (\R, \oplus_\eps, +)$ defined by $D_\eps(x) = \eps \log x$, where $a \oplus_\eps b = \lse_\eps(a, b)$, is a semiring homomorphism. As $\eps \to 0^+$, $\oplus_\eps \to \oplus = \max$ (tropical addition), UltraLIF dynamics converge to a piecewise-linear map on $\R_{\max}$, and decision boundaries approach tropical hypersurfaces.
\end{theorem}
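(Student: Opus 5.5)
The plan is to prove the three claims of Theorem~\ref{thm:tropical} in order: the homomorphism property, convergence of the dynamics, and convergence of decision boundaries. All three reduce to Lemma~\ref{lem:lse_convergence} together with elementary facts about $\log$ and $\exp$.

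\textbf{Homomorphism.} First I will check that $(\R, \oplus_\eps, +)$ is a semiring. The operation $\oplus_\eps$ is commutative and associative, because $\lse_\eps(\lse_\eps(a,b),c) = \eps\log(e^{a/\eps}+e^{b/\eps}+e^{c/\eps})$. Ordinary addition distributes over $\oplus_\eps$, since $c + \lse_\eps(a,b) = \lse_\eps(a+c,b+c)$. The additive identity is $-\infty$ (adjoined, with $e^{-\infty/\eps}=0$), and the multiplicative identity is $0$.

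The cleanest way to get all of this is transport of structure. The map $D_\eps(x)=\eps\log x$ is a bijection $\R_{>0}\to\R$ with inverse $X\mapsto e^{X/\eps}$. The two operations are exactly the pushforwards of $+$ and $\cdot$:
\begin{equation}
D_\eps(x+y) = \eps\log\bigl(e^{D_\eps(x)/\eps}+e^{D_\eps(y)/\eps}\bigr) = D_\eps(x)\oplus_\eps D_\eps(y),
\end{equation}
and $D_\eps(xy)=D_\eps(x)+D_\eps(y)$. The semiring axioms therefore transfer automatically from $(\R_{>0},+,\cdot)$, and $D_\eps$ is in fact an isomorphism. One caveat: $(\R_{>0},+,\cdot)$ has no additive zero, so I will either treat it as a semiring without zero or adjoin $0\mapsto-\infty$.

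\textbf{Convergence of the operation and the dynamics.} Pointwise convergence $a\oplus_\eps b\to\max(a,b)$ is Lemma~\ref{lem:lse_convergence} with $n=2$, and it is uniform, with error at most $\eps\log2$. For the dynamics, I will write one UltraLIF step as the composition of three maps:
\begin{itemize}
\item the $\lse_\eps$ update of Eq.~\eqref{eq:ultralif_voltage};
\item the sigmoid spike of Eq.~\eqref{eq:ultralif_spike};
\item the reset interpolation of Eq.~\eqref{eq:ultralif_reset}.
\end{itemize}
I will then invoke Proposition~\ref{prop:convergence}. Away from the threshold set, the limit map is
\begin{equation}
V \mapsto \max(V+\log\tau_0, I)\cdot\mathbf{1}[\max(V+\log\tau_0,I)<\theta].
\end{equation}
This is a max of affine functions on each of finitely many polyhedral pieces cut out by the threshold, so it is piecewise linear in max-plus form. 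By induction, the $t$-step composition stays piecewise linear, and the input-to-voltage part is a tropical polynomial in $(V, I^{(1)},\dots,I^{(t)})$ with coefficients $k\log\tau_0$.

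\textbf{Decision boundaries.} This step is the main obstacle, because the claim is informal. I will make it precise as follows. Restrict to a fixed spike pattern, meaning a region where every threshold margin $\delta_t$ is bounded away from zero. On that region the pre-threshold output is a tropical polynomial $p(x)=\max_k(\langle a_k,x\rangle+b_k)$. Its $\eps$-relaxation $p_\eps=\lse_\eps$ of the same affine terms satisfies $|p_\eps-p|\le\eps\log K$, where $K$ is the number of terms. So the level sets $\{p_\eps=\theta\}$ lie within Hausdorff distance $O(\eps)$ of $\{p=\theta\}$, given a nondegeneracy condition that the affine pieces have nonzero slope.

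For the claimed limit object I would argue via the tropical hypersurface of $p\oplus\theta$, i.e.\ the locus where the maximum among the terms of $p$ and the constant $\theta$ is attained at least twice. That locus contains $\{p=\theta\}$, and $\{p=\theta\}$ is a union of cells of it.

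I expect two difficulties here. One is being honest about where the reset nonlinearity breaks global tropicality, so I will only claim the result piecewise, on spike-pattern cells. The other is handling the measure-zero set where (A2) fails.
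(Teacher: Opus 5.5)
Your proof is correct, and for the first two claims it coincides with the paper's. Both use the identities $D_\eps(xy)=D_\eps(x)+D_\eps(y)$ and $D_\eps(x+y)=\lse_\eps(D_\eps(x),D_\eps(y))$, together with Lemma~\ref{lem:lse_convergence}. Your transport-of-structure remark also checks three things the paper leaves implicit: that $(\R,\oplus_\eps,+)$ is a semiring, that $D_\eps$ is an isomorphism, and that the additive zero must be adjoined.

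The routes diverge on piecewise linearity and tropical hypersurfaces, where the paper's proof is just a citation to Zhang et al.\ (2018). You argue directly instead. You derive the limiting one-step map $V\mapsto\max(V+\log\tau_0,I)\,\mathbf{1}[\max(V+\log\tau_0,I)<\theta]$ from Proposition~\ref{prop:convergence}. You observe that between resets the voltage is a max-plus polynomial in the inputs with coefficients $k\log\tau_0$. You then place each threshold set $\{p=\theta\}$ inside the tropical hypersurface $\mathcal{T}(p\oplus\theta)$ as a union of its cells. This is the same mechanism as Zhang et al.'s inclusion $\{f=g\}\subseteq\mathcal{T}(f\oplus g)$ with $g=\theta$, but your route gains two things the citation does not give:
\begin{itemize}
\item Zhang et al.\ treat ReLU networks, whose tropical rational maps are continuous. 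The UltraLIF limit contains a Heaviside threshold and a discontinuous reset, so the cited result does not literally apply; your restriction to spike-pattern cells is exactly what makes the claim true.
\item Your sandwich $p\le p_\eps\le p+\eps\log K$ gives an explicit $O(\eps)$ rate for the boundaries.
\end{itemize}
The paper's route gains only brevity.

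Three things to tighten:
\begin{enumerate}
\item[(i)] You restrict to a region where \emph{every} threshold margin is bounded away from zero, but then $\{p=\theta\}$ does not meet that region. Instead, fix the spike pattern of all threshold events \emph{except} the one defining $p$, so that the formula for $p$ is valid in a neighbourhood of its level set.
\item[(ii)] State the link from class-score ties to threshold sets. The limiting spike-rate output is constant on each spike-pattern cell, so the predicted class can change only across cell boundaries, i.e.\ on sets of the form $\{p=\theta\}$.
\item[(iii)] Nonzero slopes give only one half of your Hausdorff claim. A point with $p_\eps=\theta$ has $p\in[\theta-\eps\log K,\theta]$. Moving along the slope $a_k$ of an active piece raises $p$ at rate at least $\|a_k\|$, so $\{p=\theta\}$ is reached within distance $\eps\log K/\|a_k\|$. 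The other half fails: for $p(x)=\max(x+\theta,-x+\theta)$ one has $\{p=\theta\}=\{0\}$ while $p_\eps\ge\theta+\eps\log 2$, so $\{p_\eps=\theta\}=\emptyset$. The reverse inclusion needs $\theta$ not to be a local minimum value of $p$, i.e.\ $0\notin\mathrm{conv}\{a_k: k\text{ active}\}$ at points of $\{p=\theta\}$.
\end{enumerate}
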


\begin{proof}
$D_\eps(x \cdot y) = \eps\log(xy) = D_\eps(x) + D_\eps(y)$ (multiplication $\to$ addition) and $D_\eps(x + y) = \eps\log(x+y) = \lse_\eps(D_\eps(x), D_\eps(y))$ (addition $\to$ soft-max). Taking $\eps \to 0$ yields the tropical semiring by Lemma~\ref{lem:lse_convergence}. The piecewise-linear limit and tropical hypersurface structure follow from \citet{zhang2018tropical}.
\end{proof}

\section{Experiments}
\label{sec:experiments}

\textbf{Setup.} Evaluation spans six benchmarks: static images (MNIST, Fashion-MNIST, CIFAR-10), neuromorphic vision (N-MNIST, DVS-Gesture), and audio (SHD). A single hidden layer with 64 neurons is used across all experiments, with timesteps $T \in \{1, 10, 30\}$. Baselines include LIF, PLIF, AdaLIF, FullPLIF, and DSpike/DSpike+ with surrogate gradients. All four ultradiscretized variants (UltraLIF, UltraPLIF, UltraDLIF, UltraDPLIF) are evaluated. An optional sparsity penalty $\mathcal{L} = \mathcal{L}_\text{CE} + \lambda \cdot \bar{s}$ enables explicit accuracy-efficiency trade-offs. Energy is estimated via the relative synaptic operation (SOP) count $T \cdot \bar{s}$ \citep{lemaire2023analytical}, which is proportional to computational energy since all models share the same architecture (Appendix~\ref{app:setup}).

\begin{table}[t]
\caption{Test accuracy (\%) on CIFAR-10. UltraPLIF (temporal) achieves best at all timesteps.}
\label{tab:cifar10}
\vskip 0.1in
\begin{center}
\begin{small}
\begin{tabular}{lccc}
\toprule
Model & $T{=}1$ & $T{=}10$ & $T{=}30$ \\
\midrule
LIF        & 39.83 & 44.27 & 45.69 \\
PLIF       & 39.83 & 45.06 & 46.15 \\
AdaLIF     & 39.83 & 44.86 & 45.83 \\
FullPLIF   & 39.60 & \underline{45.43} & 46.28 \\
DSpike     & 40.26 & 44.78 & 45.34 \\
DSpike+    & \underline{40.26} & 45.42 & \underline{46.29} \\
\midrule
\multicolumn{4}{l}{\textit{Temporal (LIF ODE)}} \\
UltraLIF   & 40.72 & 45.15 & 45.69 \\
UltraPLIF  & \textbf{43.27} & \textbf{46.19} & \textbf{46.58} \\
\midrule
\multicolumn{4}{l}{\textit{Spatial (Diffusion PDE)}} \\
UltraDLIF  & 43.11 & 45.65 & 45.00 \\
UltraDPLIF & 43.11 & 45.75 & 45.74 \\
\bottomrule
\end{tabular}
\end{small}
\end{center}
\vskip -0.1in
\end{table}

\begin{table}[t]
\caption{Energy efficiency on CIFAR-10. Energy = relative SOP count ($T \cdot \bar{s}$). Sparsity penalty $\lambda$ reduces spike rate with minimal accuracy loss. At $T{=}30$, UltraPLIF with $\lambda{=}0.1$ achieves \textbf{best accuracy} while reducing energy by 50\%.}
\label{tab:efficiency}
\vskip 0.1in
\begin{center}
\begin{small}
\begin{tabular}{lcccc}
\toprule
Model & $T$ & Acc (\%) & Spike & Energy \\
\midrule
LIF        & 1 & 39.83 & 0.404 & 0.40 \\
DSpike+    & 1 & 40.26 & 0.386 & 0.39 \\
UltraPLIF   & 1 & 43.27 & 0.458 & 0.46 \\
UltraPLIF ($\lambda{=}0.1$) & 1 & \textbf{43.60} & \textbf{0.240} & \textbf{0.24} \\
\midrule
PLIF       & 10 & 45.06 & 0.356 & 3.56 \\
UltraDPLIF  & 10 & 45.75 & 0.469 & 4.69 \\
UltraDPLIF ($\lambda{=}0.1$) & 10 & \textbf{45.32} & \textbf{0.338} & \textbf{3.38} \\
\midrule
PLIF       & 30 & 46.15 & 0.377 & 11.30 \\
UltraPLIF  & 30 & 46.58 & 0.500 & 15.01 \\
UltraPLIF ($\lambda{=}0.1$) & 30 & \textbf{46.98} & \textbf{0.248} & \textbf{7.44} \\
\bottomrule
\end{tabular}
\end{small}
\end{center}
\vskip -0.1in
\end{table}

\begin{table}[t]
\caption{Test accuracy (\%) on MNIST. UltraDLIF (spatial) achieves best at $T{=}1$; UltraPLIF (temporal) at $T{=}30$.}
\label{tab:mnist}
\vskip 0.1in
\begin{center}
\begin{small}
\begin{tabular}{lccc}
\toprule
Model & $T{=}1$ & $T{=}10$ & $T{=}30$ \\
\midrule
LIF        & 95.34 & \textbf{97.45} & 97.45 \\
PLIF       & 95.34 & 97.40 & 97.33 \\
AdaLIF     & 95.34 & 97.43 & 97.45 \\
FullPLIF   & 95.27 & 97.33 & 97.31 \\
DSpike     & \underline{95.58} & 97.38 & \underline{97.48} \\
DSpike+    & \underline{95.58} & 97.39 & 97.27 \\
\midrule
\multicolumn{4}{l}{\textit{Temporal (LIF ODE)}} \\
UltraLIF   & 94.37 & 97.14 & 97.46 \\
UltraPLIF  & 95.60 & 97.30 & \textbf{97.55} \\
\midrule
\multicolumn{4}{l}{\textit{Spatial (Diffusion PDE)}} \\
UltraDLIF  & \textbf{95.67} & 97.35 & 97.38 \\
UltraDPLIF & \textbf{95.67} & 97.35 & 97.40 \\
\bottomrule
\end{tabular}
\end{small}
\end{center}
\vskip -0.1in
\end{table}

\begin{table}[t]
\caption{Energy efficiency on MNIST. Energy = relative SOP count ($T \cdot \bar{s}$). UltraDLIF with $\lambda{=}0.1$ reduces spike rate by 40\% (0.446 $\to$ 0.268). At $T{=}10$, UltraDPLIF with $\lambda{=}0.1$ achieves 50\% energy reduction.}
\label{tab:mnist_efficiency}
\vskip 0.1in
\begin{center}
\begin{small}
\begin{tabular}{lcccc}
\toprule
Model & $T$ & Acc (\%) & Spike & Energy \\
\midrule
LIF        & 1 & 95.34 & 0.388 & 0.39 \\
DSpike+    & 1 & 95.58 & 0.403 & 0.40 \\
UltraDLIF   & 1 & 95.67 & 0.446 & 0.45 \\
UltraDLIF ($\lambda{=}0.1$) & 1 & \textbf{95.71} & \textbf{0.268} & \textbf{0.27} \\
\midrule
DSpike+    & 10 & 97.39 & 0.415 & 4.15 \\
UltraDLIF ($\lambda{=}0.01$) & 10 & \textbf{97.56} & 0.448 & 4.48 \\
UltraDPLIF ($\lambda{=}0.1$) & 10 & 97.35 & \textbf{0.237} & \textbf{2.37} \\
\bottomrule
\end{tabular}
\end{small}
\end{center}
\vskip -0.1in
\end{table}

\begin{table}[t]
\caption{Test accuracy (\%) on N-MNIST (neuromorphic). UltraDLIF achieves \textbf{+3.91\%} over baselines at $T{=}1$.}
\label{tab:nmnist}
\vskip 0.1in
\begin{center}
\begin{small}
\begin{tabular}{lccc}
\toprule
Model & $T{=}1$ & $T{=}10$ & $T{=}30$ \\
\midrule
LIF        & 88.54 & 97.48 & 97.29 \\
PLIF       & 88.54 & 97.53 & 97.61 \\
AdaLIF     & 88.54 & 97.38 & 97.30 \\
FullPLIF   & 89.00 & 97.50 & 97.48 \\
DSpike     & 90.23 & 97.39 & 97.59 \\
DSpike+    & \underline{90.23} & \textbf{97.55} & \underline{97.65} \\
\midrule
\multicolumn{4}{l}{\textit{Temporal (LIF ODE)}} \\
UltraLIF   & 90.41 & 96.10 & 95.87 \\
UltraPLIF  & 93.11 & 96.33 & 95.77 \\
\midrule
\multicolumn{4}{l}{\textit{Spatial (Diffusion PDE)}} \\
UltraDLIF  & \textbf{94.14} & 97.38 & 97.46 \\
UltraDPLIF & \textbf{94.14} & 97.38 & \textbf{97.68} \\
\bottomrule
\end{tabular}
\end{small}
\end{center}
\vskip -0.1in
\end{table}

\begin{table}[t]
\caption{Test accuracy (\%) on DVS-Gesture (neuromorphic). UltraPLIF achieves \textbf{+7.96\%} at $T{=}1$.}
\label{tab:dvs}
\vskip 0.1in
\begin{center}
\begin{small}
\begin{tabular}{lccc}
\toprule
Model & $T{=}1$ & $T{=}10$ & $T{=}30$ \\
\midrule
LIF        & 52.27 & 67.05 & \textbf{79.92} \\
PLIF       & \underline{52.27} & \underline{68.94} & 78.79 \\
AdaLIF     & 47.73 & \underline{68.94} & 78.79 \\
FullPLIF   & 47.73 & 68.56 & 77.27 \\
DSpike     & 51.14 & 67.42 & 78.79 \\
DSpike+    & 51.14 & 66.67 & 78.41 \\
\midrule
\multicolumn{4}{l}{\textit{Temporal (LIF ODE)}} \\
UltraLIF   & 58.33 & \textbf{69.32} & 75.00 \\
UltraPLIF  & \textbf{60.23} & 68.94 & 75.76 \\
\midrule
\multicolumn{4}{l}{\textit{Spatial (Diffusion PDE)}} \\
UltraDLIF  & 58.33 & \textbf{69.32} & 78.41 \\
UltraDPLIF & 58.33 & 68.56 & \textbf{79.92} \\
\bottomrule
\end{tabular}
\end{small}
\end{center}
\vskip -0.1in
\end{table}

\begin{table}[t]
\caption{Test accuracy (\%) on SHD (audio). At $T{=}1$, UltraDLIF achieves \textbf{+11.22\%} over the best baseline (FullPLIF). Baselines lead at $T{\geq}10$.}
\label{tab:shd}
\vskip 0.1in
\begin{center}
\begin{small}
\begin{tabular}{lccc}
\toprule
Model & $T{=}1$ & $T{=}10$ & $T{=}30$ \\
\midrule
LIF        & 27.69 & 72.66 & 72.66 \\
PLIF       & 27.69 & 71.69 & 73.19 \\
AdaLIF     & 27.69 & \textbf{74.03} & 71.07 \\
FullPLIF   & \underline{40.02} & 71.69 & 72.48 \\
DSpike     & 38.21 & 72.04 & 70.89 \\
DSpike+    & 38.21 & 72.75 & \textbf{73.85} \\
\midrule
\multicolumn{4}{l}{\textit{Temporal (LIF ODE)}} \\
UltraLIF   & 44.88 & 58.79 & 59.14 \\
UltraPLIF  & 46.91 & 57.73 & 59.45 \\
\midrule
\multicolumn{4}{l}{\textit{Spatial (Diffusion PDE)}} \\
UltraDLIF  & \textbf{51.24} & 67.62 & 71.60 \\
UltraDPLIF & \textbf{51.24} & 68.90 & 67.84 \\
\bottomrule
\end{tabular}
\end{small}
\end{center}
\vskip -0.1in
\end{table}

\begin{table}[t]
\caption{Summary: $T{=}1$ accuracy (\%) across datasets. Best baseline and best ultradiscretized model shown in parentheses. Full results in Tables~\ref{tab:cifar10}--\ref{tab:shd} and Appendix~\ref{app:experiments}.}
\label{tab:t1_all}
\vskip 0.1in
\begin{center}
\begin{small}
\begin{tabular}{lccc}
\toprule
Dataset & Baseline & Best Ultra & $\Delta$ \\
\midrule
MNIST & 95.58 (DSpike) & \textbf{95.67} (DLIF) & +0.09 \\
Fashion & 82.67 (DSpike) & \textbf{83.02} (PLIF) & +0.35 \\
CIFAR-10 & 40.26 (DSpike) & \textbf{43.27} (PLIF) & +3.01 \\
N-MNIST & 90.23 (DSpike) & \textbf{94.14} (DLIF) & +3.91 \\
DVS & 52.27 (PLIF) & \textbf{60.23} (PLIF) & +7.96 \\
SHD & 40.02 (FullPLIF) & \textbf{51.24} (DLIF) & +11.22 \\
\bottomrule
\end{tabular}
\end{small}
\end{center}
\vskip -0.1in
\end{table}

\textbf{Single-Timestep Advantage.} The ultradiscretized models' advantage is most pronounced at $T{=}1$ (Table~\ref{tab:t1_all}), where the model must extract maximum information from a single forward pass. Crucially, \textbf{gains are largest on neuromorphic and temporal datasets}: SHD (+11.22\%), DVS-Gesture (+7.96\%), N-MNIST (+3.91\%), and CIFAR-10 (+3.01\%). On simpler static datasets, gains are smaller but consistent: Fashion-MNIST (+0.35\%), MNIST (+0.09\%). Among the two derivations, UltraDLIF (spatial) wins on N-MNIST, SHD, and MNIST, while UltraPLIF (temporal) wins on CIFAR-10, Fashion-MNIST, and DVS-Gesture (Table~\ref{tab:fashion}), suggesting both variants contribute complementary strengths.

\textbf{Neuromorphic Dataset Performance.} Tables~\ref{tab:nmnist}, \ref{tab:dvs}, and \ref{tab:shd} show that ultradiscretized models dramatically outperform baselines on neuromorphic benchmarks at $T{=}1$. On N-MNIST, UltraDLIF achieves 94.14\% versus 90.23\% for DSpike (+3.91\%). On DVS-Gesture, UltraPLIF (temporal) achieves the best result: 60.23\% versus 52.27\% for PLIF (+7.96\%). These datasets capture asynchronous events from dynamic vision sensors, where the temporal structure is fundamental. The soft max-plus dynamics appear better suited to extract information from sparse, event-driven inputs than hard-thresholding surrogates.

\textbf{Sparsity-Accuracy Trade-off.} Without sparsity penalty, ultradiscretized models tend to have higher spike rates than baselines (e.g., CIFAR-10 $T{=}1$: UltraPLIF 0.458 vs.\ LIF 0.404), as the soft spike $s_\eps \in (0,1)$ contributes nonzero activity even below threshold. The sparsity penalty $\lambda$ addresses this and enables flexible energy-accuracy trade-offs (Tables~\ref{tab:efficiency}, \ref{tab:mnist_efficiency}). On CIFAR-10, $\lambda{=}0.1$ reduces spike rates by 48\% (0.458 $\to$ 0.240) while actually improving accuracy (43.27\% $\to$ 43.60\%). On MNIST, $\lambda{=}0.1$ reduces spike rates by 40\% at $T{=}1$ (0.446 $\to$ 0.268) while maintaining or slightly improving accuracy. Remarkably, at $T{=}30$ on CIFAR-10, UltraPLIF with $\lambda{=}0.1$ achieves \emph{both} best accuracy (46.98\%) and 50\% energy reduction compared to the no-penalty baseline.

\textbf{Timestep Scaling.} Performance gaps narrow at higher $T$, and baselines often overtake: on CIFAR-10, the $T{=}1$ advantage (+3.01\% for UltraPLIF) persists but shrinks at $T{=}10$ and $T{=}30$. On SHD, ultradiscretized models lead dramatically at $T{=}1$ (+11.22\%) but baselines surpass them at $T{\geq}10$. This pattern is consistent: on N-MNIST, UltraDLIF leads by +3.91\% at $T{=}1$ but baselines lead at $T{=}10$. The explanation is that surrogate gradients suffer from forward-backward mismatch, but with sufficient $T$, the averaging effect of spike rate coding masks individual spike errors. At low $T$, each spike carries more information, making the consistency of ultradiscretization more valuable.

\textbf{Learnable Temperature.} The temperature $\eps$ implements automatic curriculum learning, starting soft (large $\eps$) for easy optimization then sharpening (small $\eps$) to approximate discrete spikes. Unlike DSpike's heuristic sharpness parameter, $\eps$ has principled convergence guarantees (Proposition~\ref{prop:convergence}). Ablation studies (Appendix~\ref{app:eps_ablation}) confirm that learned $\eps$ consistently outperforms fixed values, converging to the range 0.66--1.08.

\textbf{Computational Cost.} The $\lse$ operation adds minor overhead ($\sim$5\% wall-clock time) compared to standard LIF, but is fully parallelizable on GPU/TPU.

\section{Discussion}
\label{sec:discussion}

\textbf{Temporal and Spatial Instantiations.} UltraLIF and UltraDLIF arise from applying ultradiscretization to different source equations (LIF ODE vs.\ diffusion PDE), yielding models with distinct computational properties. UltraLIF (2-term LSE) captures temporal membrane dynamics; UltraDLIF (3-term LSE) models lateral spatial diffusion. Both share the same theoretical guarantees from Lemmas and Theorems in Section~\ref{sec:theory}.

\textbf{Relation to Surrogate Gradients.} The soft spike~\eqref{eq:ultralif_spike} superficially resembles sigmoid surrogates. The key distinction is \emph{consistency}: ultradiscretized neurons employ identical soft dynamics in both forward and backward passes, whereas surrogate methods use hard spikes forward with soft gradients backward. This consistency eliminates the gradient mismatch analyzed in \citet{gygax2025elucidating}.

\textbf{Biological Interpretation.} The temperature $\eps$ admits interpretation as neural noise or stochasticity. Biological cortical neurons exhibit highly irregular firing patterns \citep{softky1993highly}; the ultradiscretization framework provides a principled model where $\eps$ quantifies this variability.

\textbf{Tropical Geometry Perspective.} Theorem~\ref{thm:tropical} connects SNN dynamics to tropical algebraic geometry. This opens avenues for applying tropical techniques such as Newton polytopes, Bezout bounds, and tropical Nullstellensatz to analyze SNN expressivity and decision boundaries.

\textbf{Consequence: A New Activation Family.} A direct consequence of the ultradiscretization framework is that UltraLIF subsumes classical activation functions: $\lse_\eps(0, x)$ recovers the softplus (and ReLU as $\eps \to 0$), while $s_\eps$ is a scaled sigmoid. The learnable $\eps$ thus interpolates between hard spiking and smooth regimes. Details and connections to morphological neural networks are in Appendix~\ref{app:discussion}.

\textbf{Limitations.} UltraDLIF and UltraDPLIF produce identical results in our experiments, indicating that the learnable leak $\tau$ does not differentiate the spatial variant; the 3-term LSE dominates the dynamics. Tasks requiring precise spike timing may benefit differently from temporal (UltraLIF) versus spatial (UltraDLIF) variants. The soft spike $s_\eps \in (0,1)$ during training deviates from binary spikes; however, as discussed in Section~\ref{sec:neuron_models}, this is addressed by: (1) using hard thresholding at inference for neuromorphic deployment, (2) the convergence guarantee $s_\eps \to H(V-\theta)$ as $\eps \to 0$, and (3) the learned $\eps$ converging to moderate values (0.66--1.08) that balance differentiability and spike sharpness (Figure~\ref{fig:eps}).

\section{Conclusion}
\label{sec:conclusion}

This paper introduced UltraLIF, a principled framework for differentiable spiking neural networks grounded in ultradiscretization from tropical geometry. Two neuron models, UltraLIF (temporal, from the LIF ODE capturing membrane decay) and UltraDLIF (spatial, from the diffusion PDE modeling gap junction coupling across neuronal populations), use log-sum-exp as a soft relaxation of max-plus dynamics, yielding fully differentiable SNNs without surrogate gradients. Theoretical analysis establishes convergence to classical LIF dynamics, bounded gradients, and forward-backward consistency. Experiments on six benchmarks demonstrate improvements over surrogate gradient baselines, with the largest gains at $T{=}1$ on neuromorphic and audio data (SHD +11.22\%, DVS +7.96\%, N-MNIST +3.91\%). An optional sparsity penalty enables significant energy reduction while maintaining accuracy. The connection to tropical geometry opens new directions for principled analysis of spiking computation.

\section*{Impact Statement}

This paper presents theoretical work whose goal is to advance the field of Machine Learning, specifically in the domain of energy-efficient spiking neural networks. The ultradiscretization framework provides mathematical foundations that could accelerate deployment of neuromorphic computing systems, potentially reducing the energy footprint of machine learning applications. All experiments are validated on existing public benchmark datasets. There are many potential societal consequences of this work, none of which must be specifically highlighted here.

\bibliography{main}
\bibliographystyle{icml2026}

\newpage
\onecolumn
\appendix
\section{Spike Mechanism Comparison}
\label{app:taxonomy}

\begin{figure}[h]
\centering
\includegraphics[width=\textwidth]{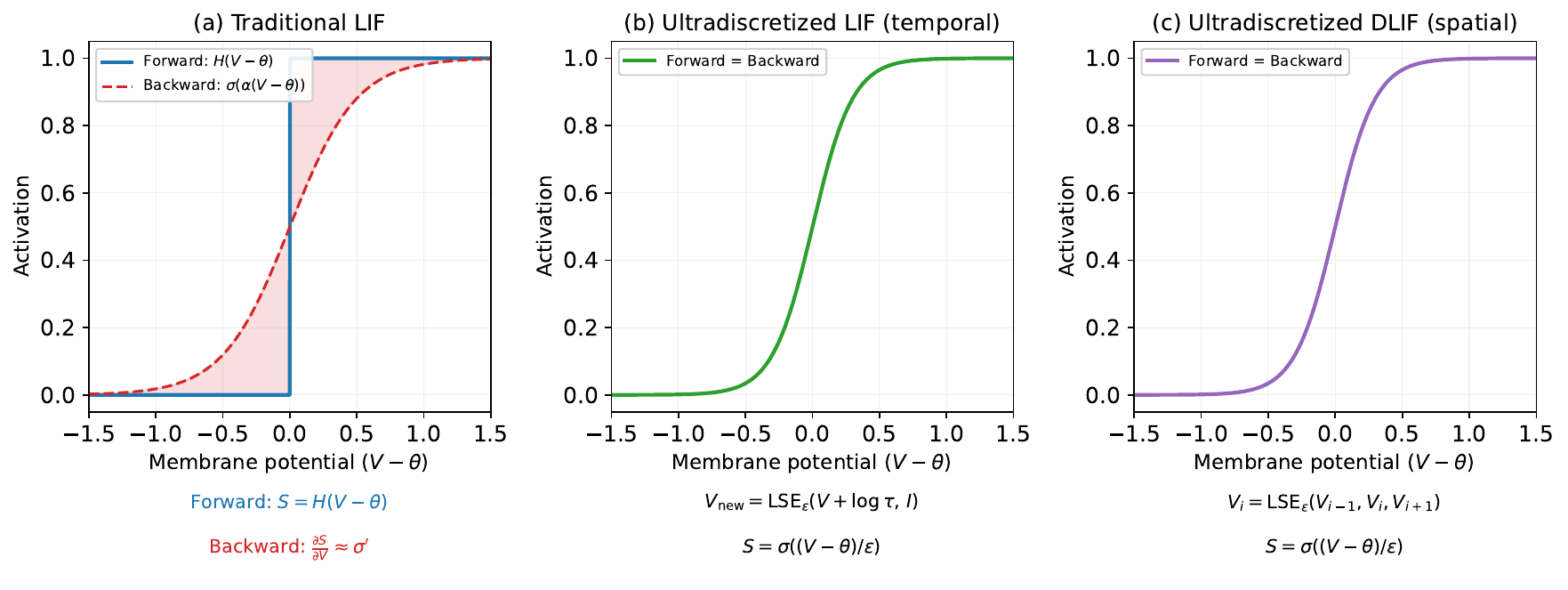}
\caption{Comparison of spike mechanisms. \textbf{(a)} Traditional LIF uses Heaviside $H(V-\theta)$ in the forward pass but a smooth surrogate $\sigma'$ for gradients, creating forward-backward mismatch (shaded region). \textbf{(b)} Ultradiscretized LIF (temporal, 2-term LSE from LIF ODE) and \textbf{(c)} Ultradiscretized DLIF (spatial, 3-term LSE from diffusion PDE) use identical smooth functions in both passes, ensuring gradient consistency. The membrane potential equations below each panel show the distinct derivations.}
\label{fig:taxonomy}
\end{figure}

\section{Additional Discussion}
\label{app:discussion}

\textbf{UltraLIF as a New Activation Family.} The ultradiscretization framework yields a novel activation function family with deep connections to existing architectures. Observe that $\lse_\eps(0, x) = \eps \log(1 + \exp(x/\eps))$ is precisely the \emph{softplus} function, which converges to $\mathrm{ReLU}(x) = \max(0, x)$ as $\eps \to 0$. Thus, the membrane dynamics in UltraLIF generalize ReLU-family activations to the spiking domain. Meanwhile, the spike function $s_\eps = \sigma((V - \theta)/\eps)$ is a shifted, scaled sigmoid, i.e., a soft step function. Together, UltraLIF combines \emph{soft-max} aggregation (ReLU family) with \emph{soft-step} thresholding (sigmoid family), both controlled by the learnable temperature $\eps$. This provides a principled spectrum of activations:
\begin{itemize}[nosep,leftmargin=*]
    \item $\eps \to 0$: Hard max + hard step (classical LIF, non-differentiable)
    \item $\eps \to \infty$: Linear + constant (no nonlinearity)
    \item $\eps$ learned: Optimal sharpness (UltraLIF, fully differentiable)
\end{itemize}
Unlike heuristic surrogate gradients, this activation family has rigorous theoretical grounding in tropical geometry and maintains forward-backward consistency by design.

\textbf{Connection to Morphological Neural Networks.} The spatial max operation in UltraDLIF (Eq.~\ref{eq:ultradlif_limit}) corresponds to \emph{morphological dilation}, a fundamental operation in mathematical morphology. Deep morphological networks \citep{franchi2020deep} show that max pooling is equivalent to dilation with a flat structuring element. This connection suggests that UltraDLIF performs learned morphological operations on neural activation patterns, providing an alternative interpretation grounded in image processing theory.

\section{Proofs}
\label{app:proofs}

\subsection{Proof of Proposition~\ref{prop:convergence}}

The proof proceeds by strong induction on $t$, tracking errors explicitly at each step.

At the base case $t=0$, the voltage error is zero by initialization ($V_\eps(0) = v_0 = v(0)$), and the spike error $|s_\eps(0) - s(0)| \leq e^{-\delta_0/\eps} \to 0$ follows from Lemma~\ref{lem:sigmoid} and the threshold margin assumption~(A2).

For the inductive step, assume $|V_\eps(k) - v(k)| \leq k \cdot \eps \log 2$ holds for all $k \leq t$, and consider the UltraLIF update $F_\eps(V) = \tilde{V}(1 - s_\eps) + V_\mathrm{reset} \cdot s_\eps$ with $\tilde{V} = \lse_\eps(V + \log\tau_0, I)$ at step $t+1$. When no spike occurs at $t$, the inductive hypothesis and Lemma~\ref{lem:sigmoid} imply $s_\eps(t) \to 0$, so the reset interpolation reduces to $V_\eps(t+1) \approx \tilde{V}_\eps(t+1) = \lse_\eps(V_\eps(t) + \log\tau_0, I(t))$. Since $\lse_\eps$ is 1-Lipschitz (Lemma~\ref{lem:lipschitz}), the accumulated error propagates with factor at most 1, and the intrinsic LSE approximation contributes at most $\eps \log 2$ (Lemma~\ref{lem:lse_convergence} with $n=2$), giving $|V_\eps(t{+}1) - v(t{+}1)| \leq |V_\eps(t) - v(t)| + \eps\log 2 \leq (t{+}1)\eps\log 2$. When a spike does occur, $s_\eps(t) \to 1$ drives the reset interpolation toward $V_\mathrm{reset} = 0$, matching the standard LIF reset exactly; the error thus resets to $O(\eps)$, dominated by $(t{+}1)\eps\log 2$.

Crucially, the reset interpolation $\tilde{V}(1-s) + V_\mathrm{reset} \cdot s$ is a convex combination for $s \in (0,1)$, making it non-expansive: perturbations in $\tilde{V}$ are damped by the factor $(1-s) \leq 1$. Combined with the 1-Lipschitz property of $\lse_\eps$, the one-step map never amplifies errors, yielding the linear growth $C_t = t$. Spike convergence at $t+1$ then follows from Lemma~\ref{lem:sigmoid}: $|s_\eps(t{+}1) - s(t{+}1)| \leq e^{-\delta_{t+1}/\eps}$ under Assumption~(A2). \qed

\subsection{Lipschitz Properties}

\begin{lemma}
\label{lem:lipschitz}
For fixed $\eps > 0$, $\lse_\eps: \R^n \to \R$ is 1-Lipschitz in $\|\cdot\|_\infty$, and $\sigma_\eps: \R \to (0,1)$ is $(4\eps)^{-1}$-Lipschitz.
\end{lemma}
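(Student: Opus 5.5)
The plan is to prove both claims through gradient bounds combined with the mean value theorem. Both $\lse_\eps$ and $\sigma_\eps$ are smooth for fixed $\eps>0$, so a uniform bound on the derivative gives a Lipschitz constant immediately.

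\textbf{The $\lse_\eps$ claim.} Take $\mathbf{x},\mathbf{y}\in\R^n$ and set $g(u) := \lse_\eps(\mathbf{y}+u(\mathbf{x}-\mathbf{y}))$ for $u\in[0,1]$. By the chain rule and Lemma~\ref{lem:lse_convergence},
\[
g'(u) = \langle \softmax(\cdot/\eps), \mathbf{x}-\mathbf{y}\rangle .
\]
The softmax vector has nonnegative entries summing to $1$, so Hölder's inequality ($\ell_1$ against $\ell_\infty$) gives $|g'(u)|\le \|\mathbf{x}-\mathbf{y}\|_\infty$. The mean value theorem then yields $|\lse_\eps(\mathbf{x})-\lse_\eps(\mathbf{y})|\le\|\mathbf{x}-\mathbf{y}\|_\infty$.

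An alternative, derivative-free argument is also available as a cross-check. Let $c=\|\mathbf{x}-\mathbf{y}\|_\infty$. Then $x_i\le y_i+c$ for every $i$, so $e^{x_i/\eps}\le e^{c/\eps}e^{y_i/\eps}$, and taking $\eps\log$ gives $\lse_\eps(\mathbf{x})\le \lse_\eps(\mathbf{y})+c$. Exchanging the roles of $\mathbf{x}$ and $\mathbf{y}$ gives the reverse inequality. I would present this monotonicity-plus-translation argument as the main proof, since it is cleaner and does not depend on any norm bookkeeping.

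\textbf{The $\sigma_\eps$ claim.} Differentiate to get
\[
\sigma_\eps'(x) = \eps^{-1}\sigma(x/\eps)\bigl(1-\sigma(x/\eps)\bigr).
\]
The function $p(1-p)$ on $[0,1]$ is maximized at $p=1/2$ with value $1/4$, so $0<\sigma_\eps'\le (4\eps)^{-1}$. This is the same calculation already used in Proposition~\ref{prop:gradients}. The one-dimensional mean value theorem then gives $|\sigma_\eps(x)-\sigma_\eps(y)|\le (4\eps)^{-1}|x-y|$. The range claim $\sigma_\eps(\R)\subseteq(0,1)$ follows directly from the formula $\sigma(z)=(1+e^{-z})^{-1}$.

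\textbf{Expected difficulty.} There is no real obstacle here. The only point needing care is pairing the norms correctly in the $\lse_\eps$ case: the gradient must be bounded in the dual norm $\ell_1$, which equals $1$ exactly because the softmax is a probability vector. This is why the constant is $1$ with respect to $\|\cdot\|_\infty$. With respect to $\|\cdot\|_2$ the bound would still hold, since $\|\softmax\|_2\le 1$, but $\ell_\infty$ is the natural norm for the coordinatewise error propagation used in Proposition~\ref{prop:convergence}.
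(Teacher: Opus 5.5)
Your proposal is correct. Your first argument (mean value theorem along the segment, then H\"older pairing the softmax gradient in $\ell_1$ against $\mathbf{x}-\mathbf{y}$ in $\ell_\infty$) is exactly the paper's proof, and your $\sigma_\eps$ argument via $\sigma(1-\sigma)\le 1/4$ matches the paper's as well.

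The argument you would promote to the main proof is genuinely different. It uses only two facts: $\lse_\eps$ is monotone in each coordinate, and it commutes with adding constants, $\lse_\eps(\mathbf{y}+c\mathbf{1})=\lse_\eps(\mathbf{y})+c$. This is the standard reason any monotone, additively homogeneous map is nonexpansive in $\|\cdot\|_\infty$.

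What it buys you is independence from differentiability. It applies verbatim to the hard maximum at $\eps=0$, where the mean value theorem is unavailable, and to any max-plus map built from $\max$ and translations. That fits the paper's tropical viewpoint. It also means you could bound the soft-versus-hard discrepancy in Proposition~\ref{prop:convergence} using the Lipschitz property of either $\lse_\eps$ or $\max$.

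The paper's calculus route, in turn, ties the constant directly to the gradient formula of Lemma~\ref{lem:lse_convergence}. It also shows that the Lipschitz constant in any norm is bounded by the supremum of the dual norm of the softmax vector, which is the form that carries over to other norms. Either proof is complete.
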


\begin{proof}
By the mean value theorem, $|\lse_\eps(\mathbf{x}) - \lse_\eps(\mathbf{y})| \leq \sup_\mathbf{z} |\langle \nabla \lse_\eps(\mathbf{z}), \mathbf{x} - \mathbf{y}\rangle|$. Since $\nabla \lse_\eps = \mathrm{softmax}(\cdot/\eps)$ has $\|\nabla \lse_\eps\|_1 = 1$ (Lemma~\ref{lem:lse_convergence}), H\"older's inequality gives $|\langle \nabla \lse_\eps, \mathbf{x} - \mathbf{y}\rangle| \leq \|\nabla \lse_\eps\|_1 \cdot \|\mathbf{x} - \mathbf{y}\|_\infty = \|\mathbf{x} - \mathbf{y}\|_\infty$. The sigmoid bound follows from $|\sigma_\eps'(x)| \leq 1/(4\eps)$ (Proposition~\ref{prop:gradients}).
\end{proof}

\subsection{Tropical Geometry Analysis}
\label{app:tropical-analysis}

Theorem~\ref{thm:tropical} establishes that UltraLIF dynamics converge to piecewise-linear maps on the max-plus semiring, with decision boundaries approaching tropical hypersurfaces. Three concrete extensions exploiting this connection are developed below, providing explicit expressivity bounds, temporal dynamics analysis, and capacity results.

\subsubsection{Tropical Characterization of Decision Boundaries}
\label{app:tropical-boundaries}

An explicit geometric characterization of UltraLIF decision boundaries in the tropical limit is provided.

\paragraph{Setup.} Consider a single-hidden-layer UltraLIF network with $h$ neurons, $n$ inputs, and $C$ output classes at $T{=}1$ in the tropical limit ($\eps \to 0^+$). Starting from $V^{(0)} = \mathbf{0}$, each hidden neuron $j \in [h]$ computes:
\begin{equation}
\label{eq:app-tropical-neuron}
V_j^{(1)} = \max(\log \tau_0, \; \mathbf{w}_j^\top \mathbf{x})
\end{equation}
where $\mathbf{w}_j \in \mathbb{R}^n$ is the weight vector for neuron $j$.

\begin{definition}[Hyperplane Arrangement]
Each neuron $j$ defines a hyperplane $\mathcal{H}_j := \{\mathbf{x} \in \mathbb{R}^n : \mathbf{w}_j^\top \mathbf{x} = \log \tau_0\}$. The collection $\mathcal{A} = \{\mathcal{H}_1, \ldots, \mathcal{H}_h\}$ partitions $\mathbb{R}^n$ into connected regions where the spike pattern $\mathbf{s} \in \{0,1\}^h$ is constant.
\end{definition}

\begin{proposition}[Hyperplane Arrangement Bound]
\label{prop:app-tropical-arrangement}
A single-hidden-layer UltraLIF partitions $\mathbb{R}^n$ into at most $R(h, n) := \sum_{k=0}^{\min(n,h)} \binom{h}{k}$ linear regions. Within each region, the network output is constant.
\end{proposition}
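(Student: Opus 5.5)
The plan has two parts: show that the output depends on $\mathbf{x}$ only through the spike pattern, and then count the distinct patterns using the classical hyperplane-arrangement bound.

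\textbf{Step 1: the output depends only on the spike pattern.} At $T{=}1$ in the tropical limit, hidden neuron $j$ has pre-reset voltage $V_j^{(1)} = \max(\log\tau_0, \mathbf{w}_j^\top\mathbf{x})$ by Eq.~\eqref{eq:app-tropical-neuron}. Its spike is $s_j = H(V_j^{(1)} - \theta)$, the limit given by Lemma~\ref{lem:sigmoid}.

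The output layer receives $I_L = W_L \mathbf{s} + b_L$ and produces spikes through another Heaviside. Hence $\hat{y}$ is a function of $\mathbf{s}\in\{0,1\}^h$ alone, and it suffices to show that $\mathbf{s}$ is constant on each cell of the arrangement.

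\textbf{Step 2: each neuron is governed by one hyperplane.} Since $\log\tau_0<0<\theta$, we get $V_j^{(1)}\ge\theta$ if and only if $\mathbf{w}_j^\top\mathbf{x}\ge\theta$. So the switching set of neuron $j$ is the hyperplane $\{\mathbf{w}_j^\top\mathbf{x}=\theta\}$.

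This differs from the level $\log\tau_0$ written in the Definition. That level is where the max switches branches, and it creates a kink in $V_j$ but not in $s_j$. I would handle this by proving the statement for the arrangement of $h$ affine hyperplanes $\{\mathbf{w}_j^\top\mathbf{x} = c_j\}$ with $c_j=\theta$. The count depends only on having $h$ affine hyperplanes, so it is unaffected. Equivalently, one may reparametrize the threshold so the two levels coincide.

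Off these hyperplanes, each $s_j$ is locally constant. Therefore on each connected component of $\mathbb{R}^n\setminus\bigcup_j\mathcal{H}_j$, the pattern $\mathbf{s}$, and hence the output, is constant. Points on the hyperplanes form a measure-zero set, matching the (A2) convention of Proposition~\ref{prop:convergence}.

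\textbf{Step 3: counting regions.} I would invoke the classical bound, due to Zaslavsky and Buck, that $h$ affine hyperplanes in $\mathbb{R}^n$ create at most $\sum_{k=0}^{n}\binom{h}{k}$ regions. The proof goes by induction on $h$ using the deletion recursion $r(h,n)\le r(h-1,n)+r(h-1,n-1)$. Adding the $h$-th hyperplane splits exactly as many old regions as it has regions in its own induced arrangement, which lives in $\mathbb{R}^{n-1}$ and has at most $h-1$ hyperplanes.

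The base cases are $r(0,n)=1$ and $r(h,0)=1$. Pascal's rule closes the induction. Terms with $k>h$ vanish, so the upper limit $\min(n,h)$ in the statement is harmless.

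\textbf{Main obstacle.} I expect the delicate step to be Step 2, reconciling the threshold $\theta$ with the hyperplane level $\log\tau_0$. It must be argued carefully that exactly one hyperplane per neuron governs the spike pattern. The counting in Step 3 is standard.
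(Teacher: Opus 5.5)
Your proof is correct and follows the paper's route: the output depends only on the hidden spike pattern, that pattern is constant on the cells of an arrangement of $h$ affine hyperplanes, and the Zaslavsky/Buck count bounds the number of cells. You are more careful than the paper in two places: its proof puts the firing half-space at $\mathbf{w}_j^\top\mathbf{x}>\log\tau_0$, whereas, as you note, $\log\tau_0<0<\theta$ means the spike actually switches at $\mathbf{w}_j^\top\mathbf{x}=\theta$ (the count is unaffected), and it cites only the general-position maximum, while your deletion--restriction induction proves the ``at most'' bound for arbitrary arrangements (a hyperplane that coincides with an earlier one splits no regions, so your ``exactly'' should read ``at most'' there, which is harmless).
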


\begin{proof}
Each neuron defines a half-space $\mathcal{H}_j^+ = \{\mathbf{x} : \mathbf{w}_j^\top \mathbf{x} > \log\tau_0\}$ where $s_j = 1$. The spike pattern is determined by membership in intersections of these half-spaces. The number of regions created by $h$ hyperplanes in $\mathbb{R}^n$ in general position (Theorem~\ref{thm:app-zonotope-expressivity}) is~\citep{zaslavsky1975}:
\[
r_n(\mathcal{A}) = \sum_{k=0}^{n} \binom{h}{k}
\]
When $h < n$, terms with $k > h$ vanish since $\binom{h}{k} = 0$ for $k > h$, yielding the equivalent formula $\sum_{k=0}^{\min(n,h)} \binom{h}{k}$. Within each region, $\mathbf{s}$ is constant, so the output $\hat{y}_c = \sum_{j: s_j=1} W^{\mathrm{out}}_{cj}$ is constant.
\end{proof}

\paragraph{Tropical Hypersurface Structure.} The decision boundary $\mathcal{B}_{ij} = \{\mathbf{x} : \hat{y}_i(\mathbf{x}) = \hat{y}_j(\mathbf{x})\}$ is a union of $(n{-}1)$-faces of the arrangement where class scores are equal. This realizes Theorem~\ref{thm:tropical}'s connection: $\mathcal{B}_{ij}$ is a tropical hypersurface in the sense that it is the locus where two piecewise-linear functions achieve equality.

\subsubsection{Temporal Expressivity Amplification}
\label{app:temporal-expressivity}

\begin{proposition}[Exponential Growth]
\label{prop:app-temporal-expressivity}
A single-hidden-layer UltraLIF unrolled for $T$ timesteps partitions $\mathbb{R}^n$ into at most $R(h,n)^T$ regions.
\end{proposition}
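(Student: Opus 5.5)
We argue by induction on $T$, working throughout in the tropical limit $\eps \to 0^+$ and with the same setup as Proposition~\ref{prop:app-tropical-arrangement}. The invariant we carry is:
\begin{quote}
there is a partition $\mathcal{P}_T$ of $\mathbb{R}^n$ into at most $R(h,n)^T$ regions such that, on each region, the whole history $(\mathbf{s}^{(1)},\ldots,\mathbf{s}^{(T)})$ of spike patterns is constant and every membrane potential $V_j^{(T)}$ is an affine function of $\mathbf{x}$.
\end{quote}
This is stronger than a region count, but it is what lets the induction go through. Once the spike history is constant on a region, the rate-coded output $\hat{y} = \frac{1}{T}\sum_t \mathbf{s}^{(t)}$ is constant there as well.

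\textbf{Base case $T=1$.} This is exactly Proposition~\ref{prop:app-tropical-arrangement}. On each cell of the arrangement $\mathcal{A}$, the spike pattern $\mathbf{s}^{(1)}$ is fixed. Each $V_j^{(1)} = \max(\log\tau_0, \mathbf{w}_j^\top\mathbf{x})$ is then one of its two affine branches. The reset then gives either $0$ (if neuron $j$ spiked) or that affine branch (if it did not).

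\textbf{Inductive step.} Fix a region $P \in \mathcal{P}_t$. On $P$ the input current $I_j^{(t)}$ is affine in $\mathbf{x}$, whether it is static input or a constant-pattern feedthrough. The carried potential $V_j^{(t)} + \log\tau_0$ is also affine. The next pre-reset potential $\max(V_j^{(t)}+\log\tau_0, I_j^{(t)})$ therefore switches branch across a single hyperplane. The threshold $\tilde V_j^{(t+1)} = \theta$, restricted to each branch, is one more hyperplane.

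This gives a family of $h$ "effective" hyperplanes per timestep, one per neuron. Restricted to $P$, they cut $P$ into at most $R(h,n)$ pieces, by the Zaslavsky count used in Proposition~\ref{prop:app-tropical-arrangement}; intersecting with the convex set $P$ cannot increase the number of cells. Summing over the at most $R(h,n)^t$ regions of $\mathcal{P}_t$ gives $R(h,n)^{t+1}$. On each new piece the pattern $\mathbf{s}^{(t+1)}$ is constant and the reset keeps every potential affine, so the invariant is restored.

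\textbf{Main obstacle.} The difficulty is that each neuron can contribute two hyperplanes per step rather than one: a max-branch switch and a threshold crossing. Naively this would give a count of $R(2h,n)$ per step. To avoid this, I would first use the $T{=}1$ structure, where the only threshold-relevant event is $\mathbf{w}_j^\top\mathbf{x}$ crossing a single level. I would then argue, within each region, that the spike decision $\tilde V_j^{(t+1)} > \theta$ reduces to a single affine inequality.

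The argument uses $\theta > 0$ together with $\log\tau_0 < 0$ and $V_\text{reset}=0$. It splits into two cases:
\begin{itemize}
\item if neuron $j$ spiked at step $t$, the leak branch $\log\tau_0 + 0$ lies below $\theta$;
\item if it did not spike, $V_j^{(t)} \le \theta$, so the leak branch $V_j^{(t)} + \log\tau_0$ is again below $\theta$.
\end{itemize}
In either case the spike condition is simply $I_j^{(t)} > \theta$, which is one hyperplane.

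If this reduction fails in some configuration, the fallback is to state the bound with $R(2h,n)$ in place of $R(h,n)$, or to assume general position. Everything is in the limit $\eps\to0^+$; for finite $\eps$ the "regions" are only approximate, consistent with Theorem~\ref{thm:tropical}.
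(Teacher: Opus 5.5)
Your skeleton is the same as the paper's: induct on $T$ and refine each existing region into at most $R(h,n)$ pieces per timestep. Your reduction is correct, and it is exactly the step the paper's proof takes for granted when it asserts that each step contributes only $R(h,n)$ regions. Because $\log\tau_0<0<\theta$, $V_\mathrm{reset}=0$, and a neuron that did not spike has $V_j^{(t)}\le\theta$, the tropical limit gives $\tilde V_j^{(t+1)}>\theta$ if and only if $I_j^{(t)}>\theta$. So only the $h$ hyperplanes $\{I_j^{(t)}=\theta\}$ matter, and no fallback to $R(2h,n)$ is needed.

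The gap is in the invariant you carry: your inductive step does not restore it, and it is already false at $T=1$. Requiring every $V_j^{(T)}$ to be affine on each region would force you to also cut along the branch switch $V_j^{(t)}+\log\tau_0=I_j^{(t)}$ of every non-spiking neuron. If you cut only along $\{I_j^{(t)}=\theta\}$, the potential $\max(V_j^{(t)}+\log\tau_0,\,I_j^{(t)})$ can still kink inside a piece, so your sentence claiming the reset keeps every potential affine is wrong. For example, take $n=h=1$ with weight $1$. The post-reset $V^{(1)}$ equals $\log\tau_0$, $x$, and $0$ on $x<\log\tau_0$, $\log\tau_0<x<\theta$, and $x>\theta$ respectively. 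That is three affine pieces, whereas $R(1,1)=2$.

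Your base case also imports from Proposition~\ref{prop:app-tropical-arrangement} the claim that $\mathbf{s}^{(1)}$ is constant on the cells of $\mathcal{A}$. But $\mathcal{A}$ sits at level $\log\tau_0$, while the spike boundary is $\mathbf{w}_j^\top\mathbf{x}=\theta$. The count is unaffected, since both are families of $h$ hyperplanes.

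The repair is to drop the affine clause and carry only \emph{the spike history is constant on each region}, with base arrangement $\{\mathbf{w}_j^\top\mathbf{x}=\theta\}$. Your reduction uses only the previous spike bit, never the affine form of $V_j^{(t)}$, so the induction then closes. In fact it closes with room to spare. For a static input, $I_j^{(t)}=\mathbf{w}_j^\top\mathbf{x}$ at every step, so the cuts are the same $h$ hyperplanes each time and the spike pattern is identical at all timesteps. Hence there are at most $R(h,n)$ regions for every $T$: the stated bound $R(h,n)^T$ holds, but there is no actual growth in $T$.
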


\begin{proof}
Let $\phi^{(t)}: \mathbb{R}^n \to \mathbb{R}^n$ denote the map from input to hidden layer membrane potentials at timestep $t$. Each $\phi^{(t)}$ is piecewise-linear with at most $R(h,n)$ regions by Proposition~\ref{prop:app-tropical-arrangement}. The $T$-step computation is the composition $\phi^{(T)} \circ \cdots \circ \phi^{(1)}$.

By induction on $T$, the composition has at most $R(h,n)^T$ linear regions. Base case ($T=1$): immediate from Proposition~\ref{prop:app-tropical-arrangement}. Inductive step: assume the claim holds for $T-1$. Then $\phi^{(T-1)} \circ \cdots \circ \phi^{(1)}$ has at most $R(h,n)^{T-1}$ regions. Composing with $\phi^{(T)}$ (which has $R(h,n)$ regions) yields at most $R(h,n)^{T-1} \cdot R(h,n) = R(h,n)^T$ regions, since each region of the $(T-1)$-step map can be subdivided by the $R(h,n)$ hyperplanes of $\phi^{(T)}$.
\end{proof}

\paragraph{T=1 vs T$\geq$10 Analysis.} At $T{=}1$, expressivity is $R(h,n) \approx 10^{19}$ for typical settings, far exceeding dataset size. The advantage comes from gradient quality: UltraLIF's forward-backward consistency avoids spurious local minima induced by surrogate gradient mismatch.

At $T{\geq}10$, expressivity grows to $R(h,n)^{10}$. Output averaging $\hat{y} = \frac{1}{T}\sum_t W^{\mathrm{out}}\mathbf{s}^{(t)}$ smooths individual spike errors. Gradient mismatch becomes less critical as errors cancel across timesteps, explaining why baselines recover (SHD: UltraLIF +11.22\% at $T{=}1$ but $-2.16\%$ at $T{=}30$).

\subsubsection{Zonotope Volume and Expressivity}
\label{app:zonotope}

\begin{definition}[Zonotope]
The zonotope generated by $\mathbf{w}_1, \ldots, \mathbf{w}_h \in \mathbb{R}^n$ is $\mathcal{Z}(\mathbf{w}_1, \ldots, \mathbf{w}_h) := \{\sum_{i=1}^h \lambda_i \mathbf{w}_i : \lambda_i \in [0,1]\}$.
\end{definition}

The volume $\mathrm{vol}_n(\mathcal{Z})$ quantifies geometric diversity of weight directions. When $\mathrm{vol}_n(\mathcal{Z}) = 0$, weights are linearly dependent and the arrangement degenerates.

\begin{theorem}[Volume-Expressivity Connection]
\label{thm:app-zonotope-expressivity}
Let $W = [\mathbf{w}_1, \ldots, \mathbf{w}_h]^\top \in \mathbb{R}^{h \times n}$ be the weight matrix. The hyperplane arrangement achieves the maximal region count $R(h,n) = \sum_k \binom{h}{k}$ if and only if the hyperplanes are in general position. General position holds if and only if:
\begin{enumerate}
\item For any $n{+}1$ weight vectors $\mathbf{w}_{i_1}, \ldots, \mathbf{w}_{i_{n+1}}$, no point $\mathbf{x}$ satisfies all $n{+}1$ hyperplane equations simultaneously.
\item Any subset of $n$ weight vectors $\{\mathbf{w}_{i_1}, \ldots, \mathbf{w}_{i_n}\}$ with $n \leq \min(h, \dim(\mathbb{R}^n))$ has $\det(\mathbf{w}_{i_1}, \ldots, \mathbf{w}_{i_n}) \neq 0$ when $n = \dim(\mathbb{R}^n)$.
\end{enumerate}
Moreover, if $h \geq n$ and the weight matrix $W$ has rank $n$, then $\mathrm{vol}_n(\mathcal{Z}(\mathbf{w}_1, \ldots, \mathbf{w}_h)) > 0$ implies the arrangement is non-degenerate with at least $2^n$ regions.
\end{theorem}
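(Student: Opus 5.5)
The plan is to treat the statement as three claims and prove each with classical hyperplane-arrangement facts, specialised to the affine hyperplanes $\mathcal{H}_j = \{\mathbf{x} : \mathbf{w}_j^\top \mathbf{x} = \log\tau_0\}$.

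\textbf{Step 1 (maximal count $\iff$ general position).} I would use Zaslavsky's theorem: the number of regions of an affine arrangement equals $\sum_{X \in L(\mathcal{A})} |\mu(\hat 0, X)|$, summed over the intersection poset $L(\mathcal{A})$.
\begin{itemize}
\item \emph{Sufficiency.} In general position, $L(\mathcal{A})$ consists exactly of the intersections of $k \le n$ hyperplanes, each of codimension $k$, and $|\mu| = 1$ for each. This gives $\sum_{k=0}^{\min(n,h)} \binom{h}{k}$.
\item \emph{Necessity.} I would use deletion--restriction, $r(\mathcal{A}) = r(\mathcal{A}\setminus H) + r(\mathcal{A}^H)$, and induct on $h$. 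Any degeneracy makes some restriction $\mathcal{A}^H$ have strictly fewer than the general-position count in $\mathbb{R}^{n-1}$. Possible degeneracies are parallel hyperplanes, $k \le n$ hyperplanes meeting in codimension $<k$, or $n+1$ hyperplanes sharing a point. The inductive bound $r \le R(h-1,n) + R(h-1,n-1) = R(h,n)$ then becomes strict.
\end{itemize}

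\textbf{Step 2 (the two enumerated conditions).} I would check that (1) and (2) together are equivalent to general position. Condition (2) says every $n$ normals are linearly independent. Then any $k \le n$ normals are independent, so any $k$ of the hyperplanes meet in an affine subspace of codimension exactly $k$ (solvable by rank). Condition (1) excludes the remaining degeneracy, namely $n+1$ hyperplanes through a common point. The main obstacle here is that condition (2) is stated awkwardly, with $n$ overloaded as both subset size and ambient dimension. I would first fix the reading "every $\min(h,n)$-subset of normals is linearly independent" and prove the equivalence under that reading, noting explicitly that this is the intended interpretation.

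\textbf{Step 3 (volume statement).} I would use the standard zonotope volume formula
\[
\mathrm{vol}_n(\mathcal{Z}) = \sum_{|S| = n} |\det(\mathbf{w}_i)_{i \in S}|.
\]
Positive volume is therefore equivalent to some $n$-subset of normals being linearly independent, i.e.\ $\mathrm{rank}\, W = n$. For those $n$ hyperplanes, an affine change of coordinates makes them coordinate hyperplanes, which cut $\mathbb{R}^n$ into $2^n$ orthants. Adding further hyperplanes never decreases the region count, so the arrangement has at least $2^n$ regions and is non-degenerate in the sense of being essential. This part is routine once the volume formula is cited.

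I expect the only real work to be the strictness argument in the necessity direction of Step 1. To handle it, I would pick a hyperplane $H$ involved in a degeneracy and show that $\mathcal{A}^H$ has either a repeated hyperplane or a non-generic intersection, then apply the inductive hypothesis to the restriction.
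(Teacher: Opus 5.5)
Your proposal is correct and uses the same three-part skeleton as the paper: Zaslavsky for the maximal count, independent normals plus no $(n{+}1)$-fold common point for conditions (1)--(2), and $n$ independent hyperplanes cutting out $2^n$ regions. It does more work in two places.

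\textbf{First claim.} The paper only cites Zaslavsky's characterization. You prove both directions: sufficiency from the Boolean lower intervals of $L(\mathcal{A})$, and necessity by deletion--restriction with Pascal's rule $R(h{-}1,n)+R(h{-}1,n{-}1)=R(h,n)$. This gives a self-contained proof. For a clean strictness step, take $H$ inside a \emph{minimal} degenerate subfamily. Also define degeneracy as ``some $k\le n$ hyperplanes do not meet in a codimension-$k$ flat, or some $n{+}1$ share a point.'' Your list (parallel pairs, low codimension, $(n{+}1)$-fold points) misses, e.g., three pairwise non-parallel planes in $\mathbb{R}^3$ forming a triangular prism. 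Restricting to one face turns them into two parallel lines, so your induction still catches this case.

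\textbf{Volume claim.} The paper only uses that $\mathcal{Z}$ contains the parallelepiped on $n$ independent generators. That gives $\mathrm{rank}\,W=n\Rightarrow\mathrm{vol}_n(\mathcal{Z})>0$, so its $2^n$ bound really rests on the rank hypothesis. Your exact formula $\mathrm{vol}_n(\mathcal{Z})=\sum_{|S|=n}|\det W_S|$ gives the reverse implication as well, so positive volume alone suffices. Your affine change of coordinates $\mathbf{x}\mapsto W_S\mathbf{x}-\log\tau_0\,\mathbf{1}$, together with the remark that added hyperplanes only refine regions, makes precise what the paper calls the ``translated analogues'' of orthants.

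You also rightly omit the converse the paper appends, that $\mathrm{vol}_n(\mathcal{Z})=0$ forces fewer than $2^n$ regions. It is not part of the statement and is false in general: normals spanning a plane in $\mathbb{R}^3$ give a line arrangement times $\mathbb{R}$, with quadratically many regions.
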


\begin{proof}
The first statement follows from Zaslavsky's characterization of general position~\citep{zaslavsky1975}: the arrangement achieves the maximal count when no $n{+}1$ hyperplanes meet at a point and all intersections are transverse. Condition (1) ensures no common intersection of $n{+}1$ hyperplanes. Condition (2) ensures transversality: any $n$ hyperplanes intersect at a unique point (when $n$ weight vectors are linearly independent) rather than a higher-dimensional face.

For the volume statement, suppose $h \geq n$ and $\mathrm{rank}(W) = n$. Then there exist $n$ linearly independent weight vectors, say $\mathbf{w}_{i_1}, \ldots, \mathbf{w}_{i_n}$. The zonotope contains the parallelepiped spanned by these vectors:
\[
\mathcal{P} = \{\sum_{j=1}^n \lambda_j \mathbf{w}_{i_j} : \lambda_j \in [0,1]\}
\]
The volume satisfies $\mathrm{vol}_n(\mathcal{Z}) \geq \mathrm{vol}_n(\mathcal{P}) = |\det(\mathbf{w}_{i_1}, \ldots, \mathbf{w}_{i_n})| > 0$ by linear independence.

These $n$ linearly independent hyperplanes partition $\mathbb{R}^n$ into at least $2^n$ regions (the $2^n$ orthants when hyperplanes pass through the origin, or their translated analogues). Thus $R \geq 2^n$ when $\mathrm{vol}_n(\mathcal{Z}) > 0$.

Conversely, if $\mathrm{vol}_n(\mathcal{Z}) = 0$, the weight vectors lie in a proper subspace of dimension $< n$, yielding a degenerate arrangement with $R < 2^n$ (at most linear in $h$).
\end{proof}

\begin{corollary}[Capacity Lower Bound]
\label{cor:app-capacity-lower-bound}
For a single-hidden-layer UltraLIF with $h \geq n$ neurons and $\mathrm{rank}(W) = n$:
\[
\text{Expressivity} \geq 2^n \quad \text{if } \mathrm{vol}_n(\mathcal{Z}(\mathbf{w}_1, \ldots, \mathbf{w}_h)) > 0
\]
\end{corollary}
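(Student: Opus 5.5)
The corollary should follow almost directly from the volume statement of Theorem~\ref{thm:app-zonotope-expressivity}. The plan is to identify "Expressivity" with the number of regions of the arrangement $\mathcal{A}$ on which the spike pattern $\mathbf{s}$ is constant. This is the quantity bounded above by $R(h,n)$ in Proposition~\ref{prop:app-tropical-arrangement}. The task is then to show this count is at least $2^n$ under the hypotheses $h \ge n$, $\mathrm{rank}(W) = n$, and $\mathrm{vol}_n(\mathcal{Z}) > 0$.

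\textbf{Step 1: extract an independent subfamily.} Since $\mathrm{vol}_n(\mathcal{Z}) > 0$, or equivalently $\mathrm{rank}(W)=n$, I would choose indices $i_1,\dots,i_n$ with $\mathbf{w}_{i_1},\dots,\mathbf{w}_{i_n}$ linearly independent. This is the same parallelepiped argument used in the proof of Theorem~\ref{thm:app-zonotope-expressivity}.

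\textbf{Step 2: count regions of the subarrangement directly.} I would not rely on the informal "translated orthants" remark; I would make it explicit instead. The affine map $\mathbf{x} \mapsto (\mathbf{w}_{i_1}^\top\mathbf{x} - \log\tau_0, \dots, \mathbf{w}_{i_n}^\top\mathbf{x} - \log\tau_0)$ is a bijection of $\R^n$, because the matrix with these rows is invertible. It sends $\mathcal{H}_{i_1},\dots,\mathcal{H}_{i_n}$ to the coordinate hyperplanes. For every sign vector in $\{+,-\}^n$ it therefore has a nonempty open preimage, which is a cell on which the spike sub-pattern $(s_{i_1},\dots,s_{i_n})$ takes each of its $2^n$ possible values.

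\textbf{Step 3: monotonicity under adding hyperplanes.} Adding the remaining $h-n$ hyperplanes can only refine the partition. Each of the $2^n$ cells found in Step 2 therefore contains at least one region of the full arrangement $\mathcal{A}$. Distinct cells carry distinct sub-patterns, so these regions are distinct, and the full arrangement has at least $2^n$ regions with pairwise distinct spike patterns $\mathbf{s}$.

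The main obstacle is interpretive rather than technical. "Expressivity" must be read as the number of linear regions or distinct spike patterns, not as the number of distinct output values. Distinct patterns $\mathbf{s}$ can map to the same $\hat{y} = W^{\mathrm{out}}\mathbf{s}$ when $W^{\mathrm{out}}$ is degenerate, so I would state explicitly that the bound counts regions of the tropical-limit arrangement, consistent with Proposition~\ref{prop:app-tropical-arrangement}. I would also remark that the hypothesis $\mathrm{vol}_n(\mathcal{Z})>0$ is equivalent to $\mathrm{rank}(W)=n$, so the two assumptions coincide.
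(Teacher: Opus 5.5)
Your proposal is correct and follows the paper's argument: use $\mathrm{rank}(W)=n$ to extract $n$ linearly independent weight vectors, and note that their $n$ hyperplanes alone already cut $\R^n$ into $2^n$ cells. Your explicit affine-bijection and refinement steps make rigorous what the paper leaves as the informal ``translated orthants'' remark, and your observation that $\mathrm{vol}_n(\mathcal{Z})>0$ is equivalent to $\mathrm{rank}(W)=n$ is also correct.
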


\paragraph{Implications.}

\textit{Initialization.} Standard random initialization (Kaiming, Xavier) samples from isotropic Gaussians, yielding near-orthogonal weights with high probability in high dimensions. This ensures $\mathrm{vol}_n(\mathcal{Z}) > 0$ and non-degenerate arrangements. Explicit orthogonalization (QR decomposition) maximizes volume for fixed norms.

\textit{Regularization.} A volume-regularized loss $\mathcal{L} = \mathcal{L}_{\text{CE}} - \alpha \log \mathrm{vol}_n(\mathcal{Z})$ encourages diverse weight directions, preventing rank collapse.

\textit{Pruning.} If $\mathbf{w}_j \approx c \cdot \mathbf{w}_k$, removing neuron $j$ reduces $\mathrm{vol}_n(\mathcal{Z})$ negligibly, providing a principled pruning criterion based on geometric redundancy.

\section{Experimental Setup}
\label{app:setup}

\subsection{Datasets and Preprocessing}

Evaluation is conducted on six benchmarks spanning static images, neuromorphic vision, and audio:

\textbf{Static Image Datasets.}
\begin{itemize}
    \item \textbf{MNIST}: $28{\times}28$ grayscale handwritten digits, 10 classes. 60,000 train / 10,000 test samples. Normalization: mean 0.1307, std 0.3081.
    \item \textbf{Fashion-MNIST}: $28{\times}28$ grayscale fashion items, 10 classes. 60,000 train / 10,000 test samples. Normalization: mean 0.2860, std 0.3530.
    \item \textbf{CIFAR-10}: $32{\times}32$ RGB natural images, 10 classes. 50,000 train / 10,000 test samples. Normalization per channel: mean (0.4914, 0.4822, 0.4465), std (0.247, 0.243, 0.262). Training augmentation: random crop (32$\times$32 with 4-pixel padding), random horizontal flip.
\end{itemize}

\textbf{Neuromorphic Datasets.}
\begin{itemize}
    \item \textbf{N-MNIST}: Neuromorphic MNIST captured with DVS camera. 60,000 train / 10,000 test samples. Input dimension: $2{\times}34{\times}34$ (ON/OFF polarity channels).
    \item \textbf{DVS-Gesture}: 11 hand gesture classes recorded with DVS128 camera. 1,176 train / 288 test samples. Input dimension: $2{\times}128{\times}128$.
\end{itemize}

\textbf{Audio Dataset.}
\begin{itemize}
    \item \textbf{SHD}: Spiking Heidelberg Digits, 20 spoken digit classes. 8,156 train / 2,264 test samples. Input dimension: 700 frequency channels.
\end{itemize}

\textbf{Temporal Encoding.} For static datasets, rate coding converts pixel intensities to spike trains:
\begin{equation}
P(\text{spike at } t) = \text{gain} \cdot x_{\text{pixel}}, \quad \text{gain} = 0.5
\end{equation}
where $x_{\text{pixel}} \in [0, 1]$ is the normalized pixel value. For neuromorphic datasets, events are binned into $T$ temporal frames using the Tonic library's \texttt{ToFrame} transform with \texttt{n\_time\_bins}${}=T$.

\subsection{Model Hyperparameters}

\begin{table}[h]
\caption{Hyperparameters for all neuron models.}
\label{tab:hyperparams}
\vskip 0.1in
\begin{center}
\begin{small}
\begin{tabular}{lcc}
\toprule
Parameter & Value & Description \\
\midrule
\multicolumn{3}{l}{\textit{Common parameters}} \\
$\theta$ (threshold) & 0.5 & Spike threshold \\
$\tau_0$ (leak) & 0.9 & Membrane time constant \\
$V_{\text{reset}}$ & 0.0 & Reset potential \\
\midrule
\multicolumn{3}{l}{\textit{Surrogate gradient (baselines)}} \\
$\beta$ (sharpness) & 10.0 & Sigmoid surrogate steepness \\
\midrule
\multicolumn{3}{l}{\textit{AdaLIF specific}} \\
$\beta_{\text{adapt}}$ & 0.1 & Threshold adaptation strength \\
$\tau_{\text{adapt}}$ & 0.9 & Adaptation decay constant \\
\midrule
\multicolumn{3}{l}{\textit{DSpike specific}} \\
$b_0$ (init) & 4.0 & Initial sharpness parameter \\
\midrule
\multicolumn{3}{l}{\textit{Ultradiscretized models}} \\
$\eps_0$ (init) & 1.0 & Initial temperature \\
$\eps$ range & [0.1, 20.0] & Clamped during training \\
\bottomrule
\end{tabular}
\end{small}
\end{center}
\vskip -0.1in
\end{table}

\subsection{Architecture}

Single hidden layer with 64 neurons. Input-to-hidden and hidden-to-output are fully connected layers. Timesteps $T \in \{1, 10, 30\}$ to evaluate performance across temporal regimes. Output is computed as mean spike rate over time:
\begin{equation}
\hat{y} = \frac{1}{T}\sum_{t=1}^T W_{\text{out}} \cdot s^{(t)}
\end{equation}

\subsection{Baseline Model Definitions}

All baselines use surrogate gradients: hard spike $s = H(v - \theta)$ in the forward pass, smooth gradient $\partial s / \partial v = \sigma'((v-\theta)\beta)$ in the backward pass.

\textbf{LIF} (Leaky Integrate-and-Fire):
\begin{align}
v^{(t+1)} &= \tau v^{(t)} + I^{(t)} \\
s^{(t+1)} &= H(v^{(t+1)} - \theta), \quad v \leftarrow v(1-s)
\end{align}

\textbf{PLIF} (Parametric LIF, \citealp{fang2021incorporating}):
\begin{align}
v^{(t+1)} &= \tau v^{(t)} + I^{(t)}, \quad \tau = \sigma(\tau_{\text{param}}) \text{ learnable}
\end{align}

\textbf{AdaLIF} (Adaptive LIF, \citealp{bellec2018long}):
\begin{align}
v^{(t+1)} &= \tau v^{(t)} + I^{(t)} \\
\theta^{(t+1)} &= \theta_0 + \beta_{\text{adapt}} \cdot b^{(t)} \\
b^{(t+1)} &= \tau_{\text{adapt}} \cdot b^{(t)} + (1 - \tau_{\text{adapt}}) \cdot s^{(t)}
\end{align}
where $b$ is the adaptation variable that increases after spikes.

\textbf{FullPLIF} (Fully Parametric LIF):
\begin{align}
v^{(t+1)} &= \tau v^{(t)} + I^{(t)} \\
\tau &= \sigma(\tau_{\text{param}}), \quad \theta = \sigma(\theta_{\text{param}})
\end{align}
Both $\tau$ and $\theta$ are learnable, constrained to $(0, 1)$ via sigmoid.

\textbf{DSpike} (\citealp{li2021differentiable}):
\begin{align}
v^{(t+1)} &= \tau v^{(t)} + I^{(t)} \\
s^{(t+1)} &= \frac{\tanh(k(v_{\text{norm}} - 0.5)) + \tanh(k/2)}{2\tanh(k/2)}
\end{align}
where $v_{\text{norm}} = v / (2\theta)$ normalizes membrane potential to $[0, 1]$, and $k$ is a learnable sharpness parameter (initialized to 4.0).

\textbf{DSpike+}: DSpike with learnable $\tau = \sigma(\tau_{\text{param}})$.

\subsection{Proposed Methods}

Ultradiscretized variants use soft spike in both forward and backward passes (no surrogate gradients):

\textbf{UltraLIF} (Temporal, 2-term LSE from LIF ODE):
\begin{align}
V^{(t+1)} &= \lse_\eps(V^{(t)} + \log\tau_0, \, I^{(t)}) \\
s_\eps^{(t+1)} &= \sigma((V^{(t+1)} - \theta) / \eps)
\end{align}

\textbf{UltraDLIF} (Spatial, 3-term LSE from diffusion PDE):
\begin{align}
V_i^{(t+1)} &= \lse_\eps(V_{i-1}^{(t)}, V_i^{(t)}, V_{i+1}^{(t)}) + I_i^{(t)} \\
s_{i,\eps}^{(t+1)} &= \sigma((V_i^{(t+1)} - \theta) / \eps)
\end{align}

UltraPLIF and UltraDPLIF add learnable $\tau = \sigma(\tau_{\text{param}})$.

\subsection{Sparsity Penalty}

To encourage energy efficiency, ultradiscretized variants support an optional sparsity penalty:
\begin{equation}
\mathcal{L} = \mathcal{L}_\text{CE} + \lambda \cdot \bar{s}
\end{equation}
where $\bar{s}$ is the mean spike rate and $\lambda \in \{0, 0.01, 0.1\}$. This enables explicit control over the accuracy-efficiency trade-off.

\subsection{Training Details}

\begin{table}[h]
\caption{Training configuration.}
\label{tab:training}
\vskip 0.1in
\begin{center}
\begin{small}
\begin{tabular}{lc}
\toprule
Parameter & Value \\
\midrule
Optimizer & Adam \\
Learning rate & $10^{-3}$ \\
Batch size & 128 \\
Epochs & 100 \\
LR scheduler & Cosine annealing \\
Weight init & PyTorch default (Kaiming) \\
Seed & 42 \\
\bottomrule
\end{tabular}
\end{small}
\end{center}
\vskip -0.1in
\end{table}

\textbf{Loss Function.} Cross-entropy on mean spike rates:
\begin{equation}
\mathcal{L}_\text{CE} = -\sum_c y_c \log(\text{softmax}(\hat{y})_c)
\end{equation}

\textbf{Energy Estimation.} SNN energy consumption is estimated using the standard synaptic operation (SOP) framework \citep{lemaire2023analytical}. In SNNs, spike-driven computation replaces multiply-accumulate (MAC) operations with accumulate-only (AC) operations, since pre-synaptic activations are binary:
\begin{align}
E_\text{SNN} &= T \cdot \bar{s} \cdot N_\text{syn} \cdot E_\text{AC} \\
E_\text{ANN} &= N_\text{syn} \cdot E_\text{MAC}
\end{align}
where $T$ is the number of timesteps, $\bar{s}$ is the mean spike rate, $N_\text{syn}$ is the number of synaptic connections, $E_\text{AC} \approx 0.9\,\text{pJ}$, and $E_\text{MAC} \approx 4.6\,\text{pJ}$ at 45nm technology \citep{horowitz20141}. Since all models in this work share the same architecture (and thus the same $N_\text{syn}$), the energy column in results tables reports the \textbf{relative SOP count} $T \cdot \bar{s}$, which is proportional to $E_\text{SNN}$ up to a constant factor. This enables direct energy comparison across models. Note that this estimate focuses on computational energy and does not account for memory access or data movement overhead, which can be significant in practice \citep{yan2024reconsidering}.

\subsection{Hardware and Compute}

All experiments were conducted on NVIDIA T4 GPUs (16GB VRAM) using PyTorch 2.0+ with \texttt{torch.compile} for acceleration. Training time per model:
\begin{itemize}
    \item MNIST/Fashion: $\sim$3 minutes (100 epochs)
    \item CIFAR-10: $\sim$8 minutes (100 epochs)
    \item N-MNIST: $\sim$15 minutes (100 epochs)
    \item DVS-Gesture: $\sim$20 minutes (100 epochs)
    \item SHD: $\sim$10 minutes (100 epochs)
\end{itemize}
Total compute for all experiments: approximately 50 GPU-hours across three T4 VMs.

\textbf{Note on neuromorphic hardware.} All experiments in this work were conducted on conventional GPUs. Deployment on dedicated neuromorphic hardware (e.g., Intel Loihi 2, IBM TrueNorth, SpiNNaker 2, BrainScaleS-2) has not yet been evaluated. Since the ultradiscretized spike function converges to hard thresholding as $\eps \to 0$ (Proposition~\ref{prop:convergence}), the trained models can in principle be mapped to neuromorphic substrates by quantizing the soft spike to binary at inference time. Benchmarking latency, energy consumption, and accuracy on neuromorphic platforms is an important direction for future work.

\section{Additional Experiments}
\label{app:experiments}

\begin{figure}[h]
\centering
\includegraphics[width=\linewidth]{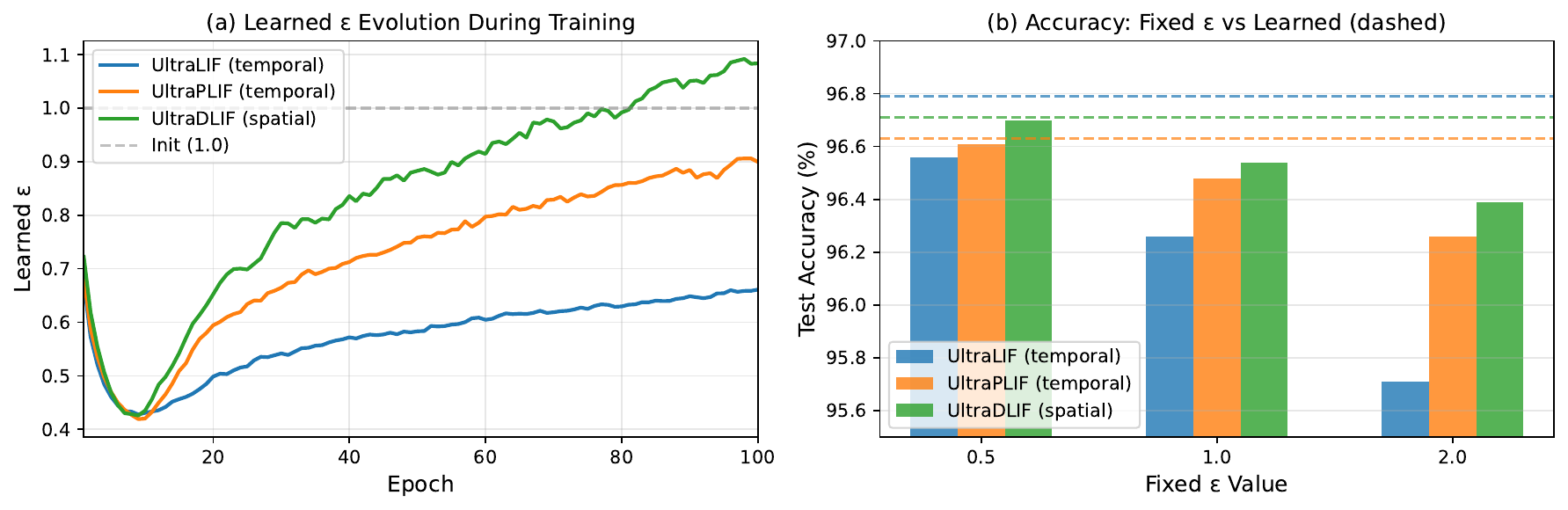}
\caption{Epsilon ablation on MNIST ($T{=}1$, 100 epochs). (a) Learned $\eps$ exhibits a characteristic U-shaped trajectory: initial drop from 1.0 to $\sim$0.42 (sharpening phase), followed by recovery to model-specific optima (0.66--1.08). This suggests the network first learns sharp discrimination, then softens for generalization. (b) Learned $\eps$ (dashed lines) consistently matches or exceeds all fixed values across models, validating the benefit of learnable temperature.}
\label{fig:eps}
\end{figure}

\subsection{Ablation Study: Learnable Temperature $\eps$}
\label{app:eps_ablation}

A key design choice in ultradiscretized neurons is whether to fix the temperature parameter $\eps$ or learn it during training. This ablation study on MNIST at $T{=}1$ compares fixed values $\eps \in \{0.5, 1.0, 2.0\}$ against learned $\eps$ (initialized to 1.0) across all four ultradiscretized variants.

\begin{table}[h]
\caption{Ablation: Effect of learnable $\eps$ on accuracy (\%). Learned $\eps$ consistently achieves best or tied-best accuracy across all models.}
\label{tab:eps_ablation_acc}
\vskip 0.1in
\begin{center}
\begin{small}
\begin{tabular}{lcccc}
\toprule
Model & $\eps{=}0.5$ & $\eps{=}1.0$ & $\eps{=}2.0$ & Learned \\
\midrule
\multicolumn{5}{l}{\textit{Temporal (LIF ODE)}} \\
UltraLIF   & 96.56 & 96.26 & 95.71 & \textbf{96.79} \\
UltraPLIF  & 96.61 & 96.48 & 96.26 & \textbf{96.63} \\
\midrule
\multicolumn{5}{l}{\textit{Spatial (Diffusion PDE)}} \\
UltraDLIF  & 96.70 & 96.54 & 96.39 & \textbf{96.71} \\
UltraDPLIF & 96.70 & 96.54 & 96.39 & \textbf{96.71} \\
\bottomrule
\end{tabular}
\end{small}
\end{center}
\vskip -0.1in
\end{table}

\begin{table}[h]
\caption{Ablation: Effect of learnable $\eps$ on spike rate. Learned $\eps$ achieves lowest spike rates for spatial models.}
\label{tab:eps_ablation_spike}
\vskip 0.1in
\begin{center}
\begin{small}
\begin{tabular}{lcccc}
\toprule
Model & $\eps{=}0.5$ & $\eps{=}1.0$ & $\eps{=}2.0$ & Learned \\
\midrule
\multicolumn{5}{l}{\textit{Temporal (LIF ODE)}} \\
UltraLIF   & 0.488 & 0.579 & 0.650 & 0.500 \\
UltraPLIF  & 0.427 & 0.459 & 0.528 & 0.443 \\
\midrule
\multicolumn{5}{l}{\textit{Spatial (Diffusion PDE)}} \\
UltraDLIF  & 0.417 & 0.412 & 0.423 & \textbf{0.393} \\
UltraDPLIF & 0.417 & 0.412 & 0.423 & \textbf{0.393} \\
\bottomrule
\end{tabular}
\end{small}
\end{center}
\vskip -0.1in
\end{table}

\textbf{Key findings:} (1) Learned $\eps$ provides consistent accuracy gains across all models, though margins are small on MNIST. (2) Smaller fixed $\eps$ (0.5) outperforms larger values (2.0), consistent with Lemma~\ref{lem:lse_convergence}, as tighter approximation to the max yields better LIF emulation. (3) Learned $\eps$ converges to the range 0.66--1.08 (Table~\ref{tab:eps_final}), automatically finding optimal soft-to-hard trade-offs. (4) Spatial models achieve lowest spike rates with learned $\eps$, suggesting the network learns to balance accuracy and efficiency.

\begin{table}[h]
\caption{Final learned $\eps$ values after training (initialized at 1.0).}
\label{tab:eps_final}
\vskip 0.1in
\begin{center}
\begin{small}
\begin{tabular}{lc}
\toprule
Model & Final $\eps$ \\
\midrule
UltraLIF (temporal)  & 0.661 \\
UltraPLIF (temporal) & 0.900 \\
UltraDLIF (spatial)  & 1.084 \\
UltraDPLIF (spatial) & 1.084 \\
\bottomrule
\end{tabular}
\end{small}
\end{center}
\vskip -0.1in
\end{table}

Table~\ref{tab:fashion} reports full results on Fashion-MNIST. UltraPLIF (temporal) achieves the best accuracy at $T{=}1$, while baselines lead at higher timesteps.

\begin{table}[h]
\caption{Test accuracy (\%) on Fashion-MNIST. UltraPLIF (temporal) achieves best at $T{=}1$. Baselines lead at $T{\geq}10$.}
\label{tab:fashion}
\vskip 0.1in
\begin{center}
\begin{small}
\begin{tabular}{lccc}
\toprule
Model & $T{=}1$ & $T{=}10$ & $T{=}30$ \\
\midrule
LIF        & 82.45 & 86.06 & 86.70 \\
PLIF       & 82.45 & 85.99 & 86.48 \\
AdaLIF     & 82.45 & 86.12 & 86.59 \\
FullPLIF   & 82.18 & \textbf{86.26} & 86.33 \\
DSpike     & \underline{82.67} & 86.24 & 86.42 \\
DSpike+    & \underline{82.67} & 86.03 & \textbf{86.76} \\
\midrule
\multicolumn{4}{l}{\textit{Temporal (LIF ODE)}} \\
UltraLIF   & 81.79 & 85.76 & 86.65 \\
UltraPLIF  & \textbf{83.02} & 86.03 & 86.59 \\
\midrule
\multicolumn{4}{l}{\textit{Spatial (Diffusion PDE)}} \\
UltraDLIF  & 82.79 & 85.88 & 86.01 \\
UltraDPLIF & 82.79 & 85.69 & 85.92 \\
\bottomrule
\end{tabular}
\end{small}
\end{center}
\vskip -0.1in
\end{table}

\subsection{Full Sparsity Results}
\label{app:sparsity}

Tables~\ref{tab:sparsity_mnist}--\ref{tab:sparsity_shd} present full sparsity results for all ultradiscretized models across sparsity penalty values $\lambda \in \{0, 0.01, 0.1\}$ and timesteps $T \in \{1, 10, 30\}$.

\begin{table}[h]
\caption{Sparsity results on MNIST. Accuracy (\%), spike rate, and relative SOP count ($T \cdot \bar{s}$).}
\label{tab:sparsity_mnist}
\vskip 0.1in
\begin{center}
\begin{small}
\begin{tabular}{llccc}
\toprule
Model & $\lambda$ & Acc & Spike & Energy \\
\midrule
\multicolumn{5}{l}{\textit{$T{=}1$}} \\
UltraLIF   & 0    & 94.37 & 0.620 & 0.62 \\
UltraLIF   & 0.01 & 94.55 & 0.608 & 0.61 \\
UltraLIF   & 0.1  & 94.37 & 0.537 & 0.54 \\
UltraPLIF  & 0    & 95.60 & 0.468 & 0.47 \\
UltraPLIF  & 0.01 & 95.50 & 0.453 & 0.45 \\
UltraPLIF  & 0.1  & \textbf{95.81} & 0.289 & 0.29 \\
UltraDLIF  & 0    & 95.67 & 0.446 & 0.45 \\
UltraDLIF  & 0.01 & 95.62 & 0.425 & 0.43 \\
UltraDLIF  & 0.1  & 95.71 & 0.268 & 0.27 \\
UltraDPLIF & 0    & 95.67 & 0.446 & 0.45 \\
UltraDPLIF & 0.01 & 95.62 & 0.425 & 0.43 \\
UltraDPLIF & 0.1  & 95.71 & \textbf{0.268} & \textbf{0.27} \\
\midrule
\multicolumn{5}{l}{\textit{$T{=}10$}} \\
UltraLIF   & 0    & 97.14 & 0.519 & 5.19 \\
UltraLIF   & 0.01 & 97.15 & 0.498 & 4.98 \\
UltraLIF   & 0.1  & 97.23 & 0.328 & 3.28 \\
UltraPLIF  & 0    & 97.30 & 0.492 & 4.92 \\
UltraPLIF  & 0.01 & 97.28 & 0.466 & 4.66 \\
UltraPLIF  & 0.1  & 97.37 & 0.242 & 2.42 \\
UltraDLIF  & 0    & 97.35 & 0.479 & 4.79 \\
UltraDLIF  & 0.01 & \textbf{97.56} & 0.448 & 4.48 \\
UltraDLIF  & 0.1  & 97.19 & 0.254 & 2.54 \\
UltraDPLIF & 0    & 97.35 & 0.476 & 4.76 \\
UltraDPLIF & 0.01 & 97.37 & 0.444 & 4.44 \\
UltraDPLIF & 0.1  & 97.35 & \textbf{0.237} & \textbf{2.37} \\
\midrule
\multicolumn{5}{l}{\textit{$T{=}30$}} \\
UltraLIF   & 0    & 97.46 & 0.502 & 15.07 \\
UltraLIF   & 0.01 & 97.41 & 0.475 & 14.26 \\
UltraLIF   & 0.1  & 97.51 & 0.270 & 8.09 \\
UltraPLIF  & 0    & \textbf{97.55} & 0.492 & 14.76 \\
UltraPLIF  & 0.01 & 97.52 & 0.464 & 13.91 \\
UltraPLIF  & 0.1  & 97.53 & 0.229 & 6.87 \\
UltraDLIF  & 0    & 97.38 & 0.469 & 14.07 \\
UltraDLIF  & 0.01 & 97.52 & 0.427 & 12.81 \\
UltraDLIF  & 0.1  & 97.11 & \textbf{0.208} & \textbf{6.24} \\
UltraDPLIF & 0    & 97.40 & 0.481 & 14.43 \\
UltraDPLIF & 0.01 & 97.34 & 0.439 & 13.18 \\
UltraDPLIF & 0.1  & 96.99 & 0.209 & 6.27 \\
\bottomrule
\end{tabular}
\end{small}
\end{center}
\vskip -0.1in
\end{table}

\begin{table}[h]
\caption{Sparsity results on Fashion-MNIST. Accuracy (\%), spike rate, and relative SOP count ($T \cdot \bar{s}$).}
\label{tab:sparsity_fashion}
\vskip 0.1in
\begin{center}
\begin{small}
\begin{tabular}{llccc}
\toprule
Model & $\lambda$ & Acc & Spike & Energy \\
\midrule
\multicolumn{5}{l}{\textit{$T{=}1$}} \\
UltraLIF   & 0    & 81.79 & 0.652 & 0.65 \\
UltraLIF   & 0.01 & 82.06 & 0.629 & 0.63 \\
UltraLIF   & 0.1  & 81.62 & 0.530 & 0.53 \\
UltraPLIF  & 0    & 83.02 & 0.472 & 0.47 \\
UltraPLIF  & 0.01 & 82.81 & 0.451 & 0.45 \\
UltraPLIF  & 0.1  & \textbf{83.26} & 0.279 & 0.28 \\
UltraDLIF  & 0    & 82.79 & 0.429 & 0.43 \\
UltraDLIF  & 0.01 & 83.01 & 0.411 & 0.41 \\
UltraDLIF  & 0.1  & 83.05 & 0.267 & 0.27 \\
UltraDPLIF & 0    & 82.79 & 0.429 & 0.43 \\
UltraDPLIF & 0.01 & 83.01 & 0.411 & 0.41 \\
UltraDPLIF & 0.1  & 83.05 & \textbf{0.267} & \textbf{0.27} \\
\midrule
\multicolumn{5}{l}{\textit{$T{=}10$}} \\
UltraLIF   & 0    & 85.76 & 0.522 & 5.22 \\
UltraLIF   & 0.01 & 86.07 & 0.510 & 5.10 \\
UltraLIF   & 0.1  & 85.98 & 0.334 & 3.34 \\
UltraPLIF  & 0    & 86.03 & 0.493 & 4.93 \\
UltraPLIF  & 0.01 & 85.93 & 0.457 & 4.57 \\
UltraPLIF  & 0.1  & \textbf{86.11} & \textbf{0.273} & \textbf{2.73} \\
UltraDLIF  & 0    & 85.88 & 0.456 & 4.56 \\
UltraDLIF  & 0.01 & 85.63 & 0.442 & 4.42 \\
UltraDLIF  & 0.1  & 85.74 & 0.292 & 2.92 \\
UltraDPLIF & 0    & 85.69 & 0.456 & 4.56 \\
UltraDPLIF & 0.01 & 85.86 & 0.427 & 4.27 \\
UltraDPLIF & 0.1  & 85.74 & 0.278 & 2.78 \\
\midrule
\multicolumn{5}{l}{\textit{$T{=}30$}} \\
UltraLIF   & 0    & 86.65 & 0.507 & 15.22 \\
UltraLIF   & 0.01 & 86.46 & 0.485 & 14.56 \\
UltraLIF   & 0.1  & 86.36 & 0.287 & 8.60 \\
UltraPLIF  & 0    & 86.59 & 0.480 & 14.41 \\
UltraPLIF  & 0.01 & 86.49 & 0.453 & 13.59 \\
UltraPLIF  & 0.1  & \textbf{86.72} & \textbf{0.271} & \textbf{8.13} \\
UltraDLIF  & 0    & 86.01 & 0.467 & 14.01 \\
UltraDLIF  & 0.01 & 85.81 & 0.429 & 12.88 \\
UltraDLIF  & 0.1  & 85.94 & 0.274 & 8.23 \\
UltraDPLIF & 0    & 85.92 & 0.463 & 13.88 \\
UltraDPLIF & 0.01 & 85.80 & 0.458 & 13.75 \\
UltraDPLIF & 0.1  & 85.84 & 0.276 & 8.27 \\
\bottomrule
\end{tabular}
\end{small}
\end{center}
\vskip -0.1in
\end{table}

\begin{table}[h]
\caption{Sparsity results on CIFAR-10. Accuracy (\%), spike rate, and relative SOP count ($T \cdot \bar{s}$).}
\label{tab:sparsity_cifar10}
\vskip 0.1in
\begin{center}
\begin{small}
\begin{tabular}{llccc}
\toprule
Model & $\lambda$ & Acc & Spike & Energy \\
\midrule
\multicolumn{5}{l}{\textit{$T{=}1$}} \\
UltraLIF   & 0    & 40.72 & 0.706 & 0.71 \\
UltraLIF   & 0.01 & 40.58 & 0.662 & 0.66 \\
UltraLIF   & 0.1  & 39.81 & 0.459 & 0.46 \\
UltraPLIF  & 0    & 43.27 & 0.458 & 0.46 \\
UltraPLIF  & 0.01 & 43.22 & 0.444 & 0.44 \\
UltraPLIF  & 0.1  & \textbf{43.60} & \textbf{0.240} & \textbf{0.24} \\
UltraDLIF  & 0    & 43.11 & 0.481 & 0.48 \\
UltraDLIF  & 0.01 & 43.13 & 0.465 & 0.47 \\
UltraDLIF  & 0.1  & 43.04 & 0.337 & 0.34 \\
UltraDPLIF & 0    & 43.11 & 0.481 & 0.48 \\
UltraDPLIF & 0.01 & 43.13 & 0.465 & 0.47 \\
UltraDPLIF & 0.1  & 43.04 & 0.337 & 0.34 \\
\midrule
\multicolumn{5}{l}{\textit{$T{=}10$}} \\
UltraLIF   & 0    & 45.15 & 0.504 & 5.04 \\
UltraLIF   & 0.01 & 44.72 & 0.494 & 4.94 \\
UltraLIF   & 0.1  & 45.18 & 0.311 & 3.11 \\
UltraPLIF  & 0    & 46.19 & 0.494 & 4.94 \\
UltraPLIF  & 0.01 & 46.06 & 0.457 & 4.57 \\
UltraPLIF  & 0.1  & 46.13 & \textbf{0.241} & \textbf{2.41} \\
UltraDLIF  & 0    & 45.65 & 0.471 & 4.71 \\
UltraDLIF  & 0.01 & 45.58 & 0.452 & 4.52 \\
UltraDLIF  & 0.1  & 45.39 & 0.334 & 3.34 \\
UltraDPLIF & 0    & 45.75 & 0.469 & 4.69 \\
UltraDPLIF & 0.01 & \textbf{46.26} & 0.452 & 4.52 \\
UltraDPLIF & 0.1  & 45.32 & 0.338 & 3.38 \\
\midrule
\multicolumn{5}{l}{\textit{$T{=}30$}} \\
UltraLIF   & 0    & 45.69 & 0.480 & 14.39 \\
UltraLIF   & 0.01 & 45.84 & 0.466 & 13.99 \\
UltraLIF   & 0.1  & 45.92 & 0.285 & 8.54 \\
UltraPLIF  & 0    & 46.58 & 0.500 & 15.01 \\
UltraPLIF  & 0.01 & 46.31 & 0.485 & 14.56 \\
UltraPLIF  & 0.1  & \textbf{46.98} & \textbf{0.248} & \textbf{7.44} \\
UltraDLIF  & 0    & 45.00 & 0.491 & 14.73 \\
UltraDLIF  & 0.01 & 45.32 & 0.446 & 13.38 \\
UltraDLIF  & 0.1  & 45.41 & 0.322 & 9.65 \\
UltraDPLIF & 0    & 45.74 & 0.496 & 14.89 \\
UltraDPLIF & 0.01 & 46.09 & 0.450 & 13.49 \\
UltraDPLIF & 0.1  & 45.79 & 0.331 & 9.94 \\
\bottomrule
\end{tabular}
\end{small}
\end{center}
\vskip -0.1in
\end{table}

\begin{table}[h]
\caption{Sparsity results on N-MNIST. Accuracy (\%), spike rate, and relative SOP count ($T \cdot \bar{s}$).}
\label{tab:sparsity_nmnist}
\vskip 0.1in
\begin{center}
\begin{small}
\begin{tabular}{llccc}
\toprule
Model & $\lambda$ & Acc & Spike & Energy \\
\midrule
\multicolumn{5}{l}{\textit{$T{=}1$}} \\
UltraLIF   & 0    & 90.41 & 0.579 & 0.58 \\
UltraLIF   & 0.01 & 90.37 & 0.597 & 0.60 \\
UltraLIF   & 0.1  & 90.74 & 0.546 & 0.55 \\
UltraPLIF  & 0    & 93.11 & 0.396 & 0.40 \\
UltraPLIF  & 0.01 & 92.73 & 0.409 & 0.41 \\
UltraPLIF  & 0.1  & 93.28 & 0.318 & 0.32 \\
UltraDLIF  & 0    & \textbf{94.14} & 0.506 & 0.51 \\
UltraDLIF  & 0.01 & 93.97 & 0.492 & 0.49 \\
UltraDLIF  & 0.1  & 93.52 & 0.291 & 0.29 \\
UltraDPLIF & 0    & \textbf{94.14} & 0.506 & 0.51 \\
UltraDPLIF & 0.01 & 93.97 & 0.492 & 0.49 \\
UltraDPLIF & 0.1  & 93.52 & \textbf{0.291} & \textbf{0.29} \\
\midrule
\multicolumn{5}{l}{\textit{$T{=}10$}} \\
UltraLIF   & 0    & 96.10 & 0.447 & 4.47 \\
UltraLIF   & 0.01 & 96.22 & 0.415 & 4.15 \\
UltraLIF   & 0.1  & 96.39 & 0.222 & 2.22 \\
UltraPLIF  & 0    & 96.33 & 0.445 & 4.45 \\
UltraPLIF  & 0.01 & 96.34 & 0.358 & 3.58 \\
UltraPLIF  & 0.1  & 96.60 & \textbf{0.127} & \textbf{1.27} \\
UltraDLIF  & 0    & 97.38 & 0.460 & 4.60 \\
UltraDLIF  & 0.01 & 97.37 & 0.335 & 3.35 \\
UltraDLIF  & 0.1  & 97.00 & 0.155 & 1.55 \\
UltraDPLIF & 0    & 97.38 & 0.463 & 4.63 \\
UltraDPLIF & 0.01 & \textbf{97.40} & 0.306 & 3.06 \\
UltraDPLIF & 0.1  & 96.93 & 0.144 & 1.44 \\
\midrule
\multicolumn{5}{l}{\textit{$T{=}30$}} \\
UltraLIF   & 0    & 95.87 & 0.404 & 12.13 \\
UltraLIF   & 0.01 & 95.77 & 0.391 & 11.72 \\
UltraLIF   & 0.1  & 96.30 & 0.267 & 8.02 \\
UltraPLIF  & 0    & 95.77 & 0.463 & 13.90 \\
UltraPLIF  & 0.01 & 95.91 & 0.412 & 12.36 \\
UltraPLIF  & 0.1  & 96.48 & 0.240 & 7.21 \\
UltraDLIF  & 0    & 97.46 & 0.429 & 12.86 \\
UltraDLIF  & 0.01 & 97.34 & 0.301 & 9.04 \\
UltraDLIF  & 0.1  & 97.28 & 0.140 & 4.20 \\
UltraDPLIF & 0    & \textbf{97.68} & 0.430 & 12.89 \\
UltraDPLIF & 0.01 & 97.52 & 0.275 & 8.26 \\
UltraDPLIF & 0.1  & 97.33 & \textbf{0.125} & \textbf{3.74} \\
\bottomrule
\end{tabular}
\end{small}
\end{center}
\vskip -0.1in
\end{table}

\begin{table}[h]
\caption{Sparsity results on DVS-Gesture. Accuracy (\%), spike rate, and relative SOP count ($T \cdot \bar{s}$).}
\label{tab:sparsity_dvs}
\vskip 0.1in
\begin{center}
\begin{small}
\begin{tabular}{llccc}
\toprule
Model & $\lambda$ & Acc & Spike & Energy \\
\midrule
\multicolumn{5}{l}{\textit{$T{=}1$}} \\
UltraLIF   & 0    & 58.33 & 0.726 & 0.73 \\
UltraLIF   & 0.01 & 57.58 & 0.746 & 0.75 \\
UltraLIF   & 0.1  & 57.95 & 0.690 & 0.69 \\
UltraPLIF  & 0    & \textbf{60.23} & 0.619 & 0.62 \\
UltraPLIF  & 0.01 & 57.20 & 0.610 & 0.61 \\
UltraPLIF  & 0.1  & 55.68 & \textbf{0.601} & \textbf{0.60} \\
UltraDLIF  & 0    & 58.33 & 0.774 & 0.77 \\
UltraDLIF  & 0.01 & 56.44 & 0.833 & 0.83 \\
UltraDLIF  & 0.1  & 58.71 & 0.790 & 0.79 \\
UltraDPLIF & 0    & 58.33 & 0.774 & 0.77 \\
UltraDPLIF & 0.01 & 56.44 & 0.833 & 0.83 \\
UltraDPLIF & 0.1  & 58.71 & 0.790 & 0.79 \\
\midrule
\multicolumn{5}{l}{\textit{$T{=}10$}} \\
UltraLIF   & 0    & 69.32 & 0.707 & 7.07 \\
UltraLIF   & 0.01 & 68.56 & 0.709 & 7.09 \\
UltraLIF   & 0.1  & 67.80 & 0.702 & 7.02 \\
UltraPLIF  & 0    & 68.94 & 0.559 & 5.59 \\
UltraPLIF  & 0.01 & 69.70 & 0.567 & 5.67 \\
UltraPLIF  & 0.1  & 70.83 & \textbf{0.543} & \textbf{5.43} \\
UltraDLIF  & 0    & 69.32 & 0.611 & 6.11 \\
UltraDLIF  & 0.01 & 71.97 & 0.601 & 6.01 \\
UltraDLIF  & 0.1  & 70.45 & 0.543 & 5.43 \\
UltraDPLIF & 0    & 68.56 & 0.615 & 6.15 \\
UltraDPLIF & 0.01 & 71.97 & 0.614 & 6.14 \\
UltraDPLIF & 0.1  & \textbf{73.11} & 0.578 & 5.78 \\
\midrule
\multicolumn{5}{l}{\textit{$T{=}30$}} \\
UltraLIF   & 0    & 75.00 & 0.719 & 21.58 \\
UltraLIF   & 0.01 & 75.38 & 0.719 & 21.56 \\
UltraLIF   & 0.1  & 75.00 & 0.707 & 21.20 \\
UltraPLIF  & 0    & 75.76 & 0.593 & 17.79 \\
UltraPLIF  & 0.01 & 75.76 & 0.588 & 17.63 \\
UltraPLIF  & 0.1  & 76.14 & 0.559 & 16.77 \\
UltraDLIF  & 0    & 78.41 & 0.560 & 16.79 \\
UltraDLIF  & 0.01 & 78.41 & 0.552 & 16.56 \\
UltraDLIF  & 0.1  & \textbf{81.06} & \textbf{0.501} & \textbf{15.04} \\
UltraDPLIF & 0    & 79.92 & 0.570 & 17.09 \\
UltraDPLIF & 0.01 & 77.65 & 0.556 & 16.68 \\
UltraDPLIF & 0.1  & 79.55 & 0.521 & 15.62 \\
\bottomrule
\end{tabular}
\end{small}
\end{center}
\vskip -0.1in
\end{table}

\begin{table}[h]
\caption{Sparsity results on SHD. Accuracy (\%), spike rate, and relative SOP count ($T \cdot \bar{s}$).}
\label{tab:sparsity_shd}
\vskip 0.1in
\begin{center}
\begin{small}
\begin{tabular}{llccc}
\toprule
Model & $\lambda$ & Acc & Spike & Energy \\
\midrule
\multicolumn{5}{l}{\textit{$T{=}1$}} \\
UltraLIF   & 0    & 44.88 & 0.551 & 0.55 \\
UltraLIF   & 0.01 & 45.76 & 0.542 & 0.54 \\
UltraLIF   & 0.1  & 46.86 & 0.507 & 0.51 \\
UltraPLIF  & 0    & 46.91 & 0.390 & 0.39 \\
UltraPLIF  & 0.01 & 47.61 & 0.394 & 0.39 \\
UltraPLIF  & 0.1  & 48.32 & \textbf{0.344} & \textbf{0.34} \\
UltraDLIF  & 0    & 51.24 & 0.686 & 0.69 \\
UltraDLIF  & 0.01 & 50.62 & 0.629 & 0.63 \\
UltraDLIF  & 0.1  & \textbf{51.33} & 0.565 & 0.56 \\
UltraDPLIF & 0    & 51.24 & 0.686 & 0.69 \\
UltraDPLIF & 0.01 & 50.62 & 0.629 & 0.63 \\
UltraDPLIF & 0.1  & \textbf{51.33} & 0.565 & 0.56 \\
\midrule
\multicolumn{5}{l}{\textit{$T{=}10$}} \\
UltraLIF   & 0    & 58.79 & 0.472 & 4.72 \\
UltraLIF   & 0.01 & 57.99 & 0.462 & 4.62 \\
UltraLIF   & 0.1  & 60.51 & 0.397 & 3.97 \\
UltraPLIF  & 0    & 57.73 & 0.420 & 4.20 \\
UltraPLIF  & 0.01 & 58.48 & 0.411 & 4.11 \\
UltraPLIF  & 0.1  & 58.79 & \textbf{0.348} & \textbf{3.48} \\
UltraDLIF  & 0    & 67.62 & 0.416 & 4.16 \\
UltraDLIF  & 0.01 & 69.52 & 0.442 & 4.42 \\
UltraDLIF  & 0.1  & 69.92 & 0.367 & 3.67 \\
UltraDPLIF & 0    & 68.90 & 0.461 & 4.61 \\
UltraDPLIF & 0.01 & 65.77 & 0.496 & 4.96 \\
UltraDPLIF & 0.1  & \textbf{70.27} & 0.399 & 3.99 \\
\midrule
\multicolumn{5}{l}{\textit{$T{=}30$}} \\
UltraLIF   & 0    & 59.14 & 0.458 & 13.73 \\
UltraLIF   & 0.01 & 59.41 & 0.451 & 13.52 \\
UltraLIF   & 0.1  & 61.00 & 0.404 & 12.11 \\
UltraPLIF  & 0    & 59.45 & 0.449 & 13.48 \\
UltraPLIF  & 0.01 & 60.16 & 0.445 & 13.35 \\
UltraPLIF  & 0.1  & 60.60 & \textbf{0.385} & \textbf{11.54} \\
UltraDLIF  & 0    & 71.60 & 0.459 & 13.76 \\
UltraDLIF  & 0.01 & 70.23 & 0.458 & 13.73 \\
UltraDLIF  & 0.1  & \textbf{73.19} & 0.386 & 11.57 \\
UltraDPLIF & 0    & 67.84 & 0.454 & 13.62 \\
UltraDPLIF & 0.01 & 67.71 & 0.449 & 13.48 \\
UltraDPLIF & 0.1  & 69.88 & 0.404 & 12.13 \\
\bottomrule
\end{tabular}
\end{small}
\end{center}
\vskip -0.1in
\end{table}

\textbf{Key observations across datasets:} (1) Moderate sparsity ($\lambda{=}0.1$) consistently reduces spike rates by 40--50\% with minimal accuracy loss, and in several cases (MNIST $T{=}1$, Fashion $T{=}1$, CIFAR-10 $T{=}30$) actually \emph{improves} accuracy, suggesting that sparsity acts as a regularizer. (2) The sparsity-accuracy trade-off is most favorable on temporal models (UltraPLIF), which achieve the lowest energy at competitive accuracy.

\end{document}